\tikzstyle{arg}=[draw, thick, circle, fill=gray!15,inner sep=3pt]
\newcommand{\nop}[1]{}
\newcommand{\parents}[1]{\ensuremath{\mathit{par}(#1)}\xspace}
\def\meet{\sqcap}
\newcommand{\tvt}{\ensuremath{\mathbf{t}}\xspace}
\newcommand{\tvf}{\ensuremath{\mathbf{f}}\hspace*{0.005mm}\xspace}
\newcommand{\tvu}{\ensuremath{\mathbf{u}}\xspace}
\newcommand{\cf}{\textit{cf}}
\newcommand{\adm}{\textit{adm}}
\newcommand{\prf}{\textit{prf}}
\newcommand{\pref}{\prf}
\newcommand{\grd}{\textit{grd}}
\begin{document}
	
	\title{Strong Admissibility for Abstract Dialectical Frameworks}
	\author{Atefeh Keshavarzi Zafarghandi}
	\affiliation{%
		\institution{Department of Artificial Intelligence, Bernoulli Institute \\ University of Groningen}
		\city{Groningen} 
		\state{ The Netherlands} 
	}
	\email{A.Keshavarzi.Zafarghandi@rug.nl}
	
	\author{Rineke Verbrugge}
	\affiliation{%
		\institution{Department of Artificial Intelligence, Bernoulli Institute \\ University of Groningen}
		\city{Groningen} 
		\state{The Netherlands} 
	}
	\email{L.C.Verbrugge@rug.nl}
	
	\author{Bart Verheij}
	\affiliation{%
		\institution{Department of Artificial Intelligence, Bernoulli Institute \\ University of Groningen}
		\city{Groningen} 
		\state{The Netherlands} 
	}
	\email{Bart.Verheij@rug.nl}
	\renewcommand{\shortauthors}{A. Keshavarzi Zafarghandi et al.}
	\begin{abstract}
		Abstract dialectical frameworks (ADFs) have been introduced as a formalism for  modeling and evaluating  argumentation allowing general logical satisfaction conditions. Different criteria used to settle the acceptance of arguments are called semantics. Semantics of ADFs have so far mainly been defined based on the concept of admissibility. However, the notion of strongly admissible semantics studied for abstract argumentation frameworks has not yet been introduced for ADFs. In the current work we present the concept of strong admissibility of interpretations for ADFs. Further, we show that strongly admissible interpretations of ADFs form a lattice  with the grounded interpretation as top element. 
	\end{abstract}

	\keywords{Abstract argumentation frameworks, Abstract dialectical frameworks, Strongly admissible semantics}
	
	\maketitle
	\section{Introduction}
	Interest and attention in artificial intelligence-related areas in argumentation theory has been increasing, by the wide variety of formalisms of argumentation to model argumentation and by the variety of semantics that clarify the acceptance of arguments \cite{baroni2018handbook,DBLP:books/sp/HAT2014}.
	Abstract argumentation frameworks (AFs) as introduced by Dung \cite{Dung95onthe} are a core formalism in formal argumentation, (have proven successful in many applications related to multi-agent systems \cite{DBLP:conf/argmas/2011}). 
	Abstract dialectical frameworks (ADFs) were first introduced in \cite{DBLP:conf/kr/BrewkaW10},  further refined in~\cite{DBLP:conf/ijcai/BrewkaSEWW13,DBLP:journals/flap/BrewkaESWW17}. They are expressive generalizations of AFs in which the logical relations among arguments can be represented.

	A key question in formal argumentation is  `How is it possible to evaluate 
	arguments 
	in a given  formalism$?$'  Answering  this question leads to the  introduction of  several types of semantics. 
	Different semantics reflect
	different types of point of view about the acceptance or denial  of arguments. Most of the semantics of AFs/ADFs are based on the concept
	of admissibility, in \cite{DBLP:journals/ai/CaminadaA07} it is shown that   admissibility  plays  an important  role  w.r.t.  rationality postulates. 
	
	It is shown in \cite{DBLP:journals/flap/BrewkaESWW17}, that each AF can be represented as an ADF, further, it is shown that  semantics defined for ADFs are proper generalizations of the semantics of AFs. However, some of the semantics of AFs  have not yet been introduced for ADFs, namely \emph{strongly admissible semantics}. In the current work we introduce strongly admissible semantics of ADFs.
	
	In ADFs an interpretation is called \emph{admissible} if it does not contain any unjustifiable information.  An interpretation is called \emph{preferred} if it is a maximal admissible interpretation. Thus, each admissible interpretation is contained in a preferred interpretation. That is, to answer the credulous decision problem under preferred semantics it is enough to answer the problem under admissible semantics.  In addition, an
	interpretation is \emph{grounded} if it collects all the information that is beyond any doubt.
	
	In AFs the concept of strongly admissible semantics has first been defined in the work of Baroni and
	Giacomin  \cite{DBLP:journals/ai/BaroniG07}, based on the notion of strong defence. 
	Later in \cite{DBLP:conf/comma/Caminada14}   this concept was introduced without referring to strong defence.  Further,   in \cite{DBLP:journals/argcom/CaminadaD19}  Caminada and Dunne presented a labelling account of strong admissibility  to answer the credulous decision problem of AFs under grounded semantics.  	In \cite{DBLP:journals/flap/Caminada17, DBLP:conf/comma/Caminada14, DBLP:journals/argcom/CaminadaD19}   it was shown that strong admissibility plays a critical role in discussion games for AFs under grounded semantics. That is,  it has been shown that strongly admissible extensions/labellings make a lattice with the maximum element of the grounded extension of a given AF. Therefore,   the concept of strong admissibility semantics of AFs relates to grounded semantics of AFs in a similar way as the relation between admissible semantics of AFs and preferred semantics of AFs. That is, to answer the credulous decision problem of AFs under grounded semantics it is enough to solve the decision problem   for AFs under strongly admissible semantics. 
	
	In \cite{DBLP:conf/comma/KeshavarziVV20}, a discussion game was introduced to answer the credulous decision problem of ADFs under grounded semantics without constructing the full grounded interpretation of the given ADF. However, the concept of strongly admissible semantics of ADFs has not been introduced. 
	
	This was a motivation for us to present the notion of strongly admissible semantics for ADFs in this work. 
	However, studying  whether the game that is presented in \cite{DBLP:conf/comma/KeshavarziVV20} is equivalent to constructing a strongly admissible interpretation that satisfies the claim, in the given ADF, is beyond the topic of this work and is left for future research.
	
	Semantics of AFs are usually defined based on extensions using the notion of argument acceptability. In contrast, semantics of ADFs are defined in terms of three-valued interpretations using both argument acceptability and deniability. In this sense, there is a connection with the use of labelings for AFs using argument acceptability/deniability (e.g., [16]). However, by the use of general propositional formulas as argument acceptance conditions, ADFs allow for richer relations between arguments than AFs, which only allow attack. 
	
	As a result, because of the special structure of ADFs, the definition of strong admissibility semantics of AFs cannot be directly reused in ADFs. Thus, we first present the notion strong acceptability/deniability of arguments in an interpretation. Then, we present the concept of strong admissibility to characterise the properties of the grounded interpretation of ADFs.
	
	The presented notion of strong admissibility for ADFs is closely related to strong admissibility for AFs in three ways. First strong admissibility is defined in terms of strongly acceptable/deniable  arguments the truth value of which presented in a given interpretation.  Second such strongly acceptable/deniable arguments are recursively reconstructed from their strongly acceptable/deniable parents. Third there is a close relation to the grounded semantics, in the formally precise sense that the maximal element of the lattice of strongly admissible sets is the grounded interpretation.

	This paper is structured as follows.	In Section \ref{sec: bac},   we  present  the  relevant  background.  Then, in Section \ref{sec: sadf}, the main contribution of our work is to introduce the concept of strongly admissible semantics for ADFs. Then
	we show that in each ADF, the set of strongly admissible interpretations  form a lattice with the trivial interpretation as the unique minimal element and the grounded interpretation as the unique maximal element
	In Section \ref{sec: con}, we present a   conclusion of our work
	and we present some future research questions arising from this work.

	\section{Formal Preliminaries} \label{sec: bac}
	In this section,	 we only briefly present the syntax of AFs \cite{Dung95onthe}. We present the concept of strongly admissible semantics of AFs due to \cite{ DBLP:journals/ai/BaroniG07}. Then, we present ADFs due to \cite{DBLP:conf/kr/BrewkaW10,DBLP:conf/ijcai/BrewkaSEWW13, DBLP:journals/flap/BrewkaESWW17}.
	
	\subsection{Abstract Argumentation Frameworks}
	We start the preliminaries to our work by recalling the basic notion of Dung's abstract argumentation frameworks (AFs) \cite{Dung95onthe} and the concept of strong admissibility semantics of AFs due to Baroni and
	Giacomin \cite{DBLP:journals/ai/BaroniG07}.
	\begin{definition} \cite{Dung95onthe}
		An abstract argumentation framework (AF) is a pair $(A, R)$ in which $A$ is a set of
		arguments and $R \subseteq A \times A$ is a binary relation representing attacks among arguments.
	\end{definition}
	Let $F = (A, R)$ be a given AF. For each $a, b \in A$, the relation $(a, b) \in R$
	is used to represent that a is an argument attacking the argument b. An argument $a \in A$ is, on the other hand,
	defended by a set $S \subseteq A$ of arguments (alternatively, the argument is acceptable w.r.t. $S$) (in $F$)
	if for each argument $c \in A$, it holds that if $(c, a) \in R$ then there is a $s \in S$ such that $(s, c) \in R$ ($s$ is called a defender of $a$).
	\begin{example}\label{exp: back AF}
		Let $F= (\{a, b, c\}, \{(a, b), (b, c)\})$ be an AF. In $F$,  $(a, b)$ means that argument $a$ attacks $b$, and  $(b, c)$ means that $b$ attacks $c$. Here, argument $c$ is defended by set $\{a\}$ (alternatively, $c$ is acceptable with respect to $\{a\}$), since $a$ attacks the attacker of $c$, namely $b$. 
	\end{example}
	\noindent
	Different semantics of AFs present which sets of arguments in a given AF can be accepted jointly. \footnote{The interested reader in semantics of AFs can see \cite{Dung95onthe}. }
	Let $F= (A, R)$ be an AF, then $S\subseteq A$ is a \emph{conflict-free} set (extension), if there exists no $a, b\in S$ such that $(a, b)\in R$.  For instance, in Example \ref{exp: back AF}, the set $\{a, c\}$ is a conflict-free set of $F$. Further, a set of arguments is a \emph{grounded} extension of an AF if (intuitively) there is no doubt on the acceptance of the arguments in the set. Every AF has a unique grounded extension. In Example \ref{exp: back AF}, a unique grounded extension of $F$ is $\{a, c\}$. We avoid here to present the formal definition of the grounded extension However, in Example \ref{exp: back AF}, the intuition is that  $a$ is not attacked by any argument, thus no one has any doubt to accept argument $a$. Argument $c$ is attacked by $b$, however, it is defended by $a$ which was accepted by everyone. Thus, $\{a,c\}$ is a unique grounded extension of $F$. In Definition \ref{def: sadm AFs} we represent the notion of strongly admissible semantics of AFs. 
	\begin{definition}\label{def: sdefended of AFs}
		\cite{DBLP:journals/ai/BaroniG07} Given an argumentation framework, $F= (A, R)$, $a\in A$ and $S\subseteq A$, it is said that $a$ is strongly defended by $S$ if and only if  each attacker $c\in A$  of $a$ is attacked by some $s\in S\setminus\{a\}$ such that $s$ is strongly defended by $S\setminus\{a\}$.
	\end{definition}
	\noindent	In other words, $a$ is strongly defended by $S$ if for any attacker of $a$ there exists a defender $s$ for $a$ in $S$ that is not equal to $a$, i.e.\ $s\not=a$, such that $s$ is strongly defended by $S\setminus\{a\}$.
	In Example \ref{exp: back AF}, argument $c$ is strongly defended by set $S= \{a, c\}$, since the attacker of $c$, namely $b$ is attacked by $a\in S\setminus\{c\}$ and $a$ is strongly defended by $S\setminus\{c\}$. Actually, $a$ is strongly defended by $S=\emptyset$, since $a$ is not attacked by any argument.
	\begin{definition}\label{def: sadm AFs}
		Given an AF $F = (A, R)$ and set $S\subseteq A$. It is said that $S$ is a strongly admissible extension of $S$ if every $s\in S$ is strongly defended by $S$.
	\end{definition}
	
	\noindent  In Example \ref{exp: back AF}, sets $S_1= \emptyset$, $S_2= \{a\}$, and $S_3=\{a, c\}$ are strongly admissible extensions of $F$; all of them are subsets of the grounded extension of $F$. However, set $S'= \{c\}$ is not a strongly admissible extension of $F$, since $c\in S'$ is not strongly defended by $S'$. Because argument $c$ is attacked by $b$, however, no argument in $S'\setminus\{c\}$ attacks $b$.
	
	\subsection{Abstract Dialectical Frameworks}
	We briefly restate some of the key concepts of abstract dialectical frameworks that are derived from those given in~\cite{DBLP:conf/kr/BrewkaW10,DBLP:conf/ijcai/BrewkaSEWW13, DBLP:journals/flap/BrewkaESWW17}.
	\begin{definition}
		An abstract dialectical framework (ADF) is a tuple $F = (A, L, C)$
		where:
		\begin{itemize}
			\item $A$ is a finite set of arguments (statements, positions); 
			\item $L\subseteq A \times A$ is a set of links among arguments;
			\item $C = \{\varphi_a\}_{a\in A}$ is a collection of propositional formulas over arguments, called acceptance conditions.
		\end{itemize}
	\end{definition}
	
	\noindent An ADF can be represented by a graph in which nodes indicate arguments and links show the relation among arguments. Each argument $a$ in an ADF is labelled by a propositional formula, called acceptance condition,  $\varphi_a$ over $\parents a$ such that, $\parents a=\{ b\ |\ (b, a)\in L\}$.  
	The acceptance condition of each argument clarifies under which condition the argument can be accepted~\cite{DBLP:conf/kr/BrewkaW10,DBLP:conf/ijcai/BrewkaSEWW13, DBLP:journals/flap/BrewkaESWW17}. Further, acceptance conditions indicate the set of links implicitly, thus, there is no need of presenting $L$ in ADFs explicitly. 
	
	An argument $a$ is called an \emph{initial argument} if $\parents a= \{\}$. 
	An \emph{interpretation} $v$ (for $F$) is a function $v : A \mapsto \{\tvt, \tvf , \tvu\}$, that maps
	arguments to one of the three truth values
	true ($\tvt$), false ($\tvf$), or undecided ($\tvu$). Truth values can be ordered via the information ordering relation $<_i$ given by  $\tvu <_i \tvt$ and $\tvu <_i \tvf$ and no other pair of truth values are related by $<_i$. Relation $\leq_i$ is the reflexive and transitive closure of $<_i$.
	The pair $(\{\tvt, \tvf, \tvu\}, \leq_i)$ is a complete meet-semilattice with the meet operator $\sqcap_i$, such that, $\tvt\sqcap_i\tvt=\tvt$, $\tvf\sqcap_i\tvf=\tvf$, and returns $\tvu$ otherwise.
	The meet of two interpretations $v$ and $w$ is then defined as $(v\meet_i w)(a)=v(a)\meet_i w(a)$ for all $a\in A$.
	
	Further, $v$ is called \emph{trivial}, and $v$ is denoted by $v_\tvu$, if $v(a) = \tvu$ for each $a \in A$. Further, $v$ is called a two-valued interpretation if for each $a\in A$ either $v(a)=\tvt$ or $v(a)=\tvf$.
	Interpretations can be ordered via $\leq_i$ with respect to  their information content. 
	Let $\mathcal{V}$ be the set of all interpretations for an ADF $F$.
	It is said that an interpretation $v$  is an \textit{extension} of another interpretation $w$, if $w(a) \leq_i v(a)$ for each $a \in A$, denoted by $w \leq_i v$. Further, if $v\leq_i w$ and $w\leq_i v$, then $v$ and $w$ are equivalent, denoted by $v\sim_i w$. 
	
	
	For reasons of brevity, we will sometimes shorten the notion of three-valued interpretation $v= \{a_1\mapsto t_1, \dots , a_m\mapsto t_m\}$ with arguments $a_1, \dots, a_m$ and truth values $t_1, \dots , t_m$ as follows: $v= \{a_i \ |\ v(a_i)=\tvt\}\cup \{\neg a_i \ |\ v(a_i)= \tvf\}$. For instance, $v= \{a\mapsto\tvf, b\mapsto \tvt\}= \{\neg a, b\}$. We use this notation in Figure \ref{fig: lattice}. 
	
	Semantics for ADFs can be defined via the
	\emph{characteristic operator} $\Gamma_F$
	which maps interpretations to interpretations.
	Given an interpretation $v$ (for $F$), the partial valuation of $\varphi_a$ by $v$,  is
	$v(\varphi_a)= \varphi_a^v = \varphi_a[b/\top : v(b)=\tvt][b/\bot : v(b)=\tvf]$, for $b\in \parents a$.
	
	\begin{definition}
		Let $F$ be an ADF and let $v$ be an interpretation of $F$. 	Applying  $\Gamma_F$  on $v$ leads to  $ v'$  s.t. for each $a\in A$, $v'$ is as follows:
		\[
		v'(a)=\begin{cases}
			\tvt &\quad\text{if $\varphi_a^v$} \text{ is irrefutable (i.e., $\varphi_a^v$ is a tautology) },\\
			\tvf &\quad\text{if $\varphi_a^v$}\text{ is unsatisfiable   (i.e., $\varphi_a^v$ is   a contradiction)}, \\
			\tvu &\quad\text{otherwise.}
		\end{cases}
		\]
	\end{definition}
	\noindent	Note that the operator $\Gamma_F$  is monotonic, that is, when $v\leq_i w$ for interpretations $v$ and $w$, then $\Gamma_F(v)\leq_i\Gamma_F(w)$. 
	The semantics of ADFs are defined via the characteristic operator as follows. 

	\begin{definition}
		\label{def:sem:adfs}
		Given an ADF $F$, an interpretation $v$ is:
		
		\begin{itemize}
			\item conflict-free iff  $v(s)= \tvt$ implies $\varphi_s^v$ is satisfiable and $v(s)=\tvf$ implies $\varphi_s^v$ is unsatisfiable;
			\item  admissible in $F$ iff $v \leq_i \Gamma_F(v)$; 
			\item  preferred in $F$ iff $v$ is $\leq_i$-maximal admissible;
			\item the grounded interpretation of $F$ iff $v$ is the least fixed point of $\Gamma_F$.
		\end{itemize}
	\end{definition}
	\noindent The set of all $\sigma$ interpretations for an ADF $F$ is denoted by $\sigma(F)$, where $\sigma\in\{\cf, \adm,  \grd, \pref\}$ abbreviates the different semantics in the obvious manner. 
	The notion of an argument being accepted and the symmetric notion of an argument being denied in an interpretation are as follows. 
	\begin{definition}\label{def: acc and deniable}
		\vspace{-0.cm}
		Let $F=(A, L, C)$ be an ADF and let $v$ be an interpretation of $F$.
		
		\begin{itemize}
			\item An argument $a\in A$ is called acceptable with respect to  $v$ if $\varphi_a^v$ is irrefutable.
			\item An argument $a\in A$ is called deniable with respect to $v$ if $\varphi_a^v$ is unsatisfiable.
		\end{itemize}
	\end{definition}
	\begin{example}\label{exp: back ADF}
		An example of an ADF $D=(S,L,C)$ is shown in Figure \ref{fig: s adm}. To each argument a propositional formula is associated, the acceptance condition of the argument. For instance, the acceptance condition of $c$, namely $\varphi_c: \neg b\land d$, states that $c$ can be accepted in an interpretation where   $b$ is denied and $d$ is accepted.
		In $D$ the interpretation $v=\{a\mapsto\tvu, b\mapsto \tvt, c\mapsto\tvu, d\mapsto \tvu\}$ is conflict-free. However, $v$ is not an admissible interpretation, because $\Gamma_D(v)=\{a\mapsto\tvu, b\mapsto\tvu, c\mapsto\tvu, d\mapsto\tvu\}$, that is, $v\not\leq_i\Gamma_D(v)$.  
		
		The interpretation $v_1=\{a\mapsto\tvt, b\mapsto\tvu, c\mapsto\tvf, d\mapsto \tvf\}$ on the other hand is an admissible interpretation. Since $\Gamma_D(v_1)=\{a\mapsto\tvt, b\mapsto\tvt, c\mapsto\tvf, d\mapsto \tvf\}$ and $v_1\leq_i\Gamma_D(v_1)$. 
		Further, in $D$ a unique grounded interpretation $v_2= \{a\mapsto\tvt, b\mapsto\tvt, c\mapsto\tvf, d\mapsto \tvf\}$ is a preferred interpretation of $D$. 
		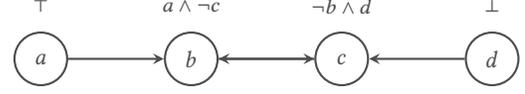
\begin{figure}[t]
			
			\centering
			\begin {tikzpicture}[black!70, >=stealth,node distance=0.6cm,
			thick,main node/.style={fill=none,draw,minimum size = 0.4cm,font=\normalsize\bfseries},
			condition/.style={fill=none,draw=none,font=\small\bfseries}]
			\path
			(0,0)node[circle, draw, minimum size = 0.7cm] (a) {$a$}
			++(2.,0) node[circle, draw, minimum size = 0.7cm] (b) {$b$}
			++(2,0) node[circle, draw, minimum size = 0.7cm] (c) {$c$}
			++(2,0) node[circle, draw, minimum size = 0.7cm] (d) {$d$};
			
			\path[thick, ->] 
			(a) edge  (b)
			(b) edge  (c)
			(c) edge  (b)
			(d) edge  (c);

			\node[condition](cs) [above of= a, yshift=0.1cm] {$\top $};
			\node[condition](cs) [above of= b, yshift=0.1cm, xshift = -0 cm] {$a\land\neg c$};
			\node[condition](cs) [above of= c, yshift=0.1cm] {$\neg b\land d$};
			\node[condition](cs) [above of= d, yshift=0.1cm] {$\bot$};
		\end{tikzpicture}
		\caption{ADF of Examples~\ref{exp: back ADF}  and \ref{exp: s adm}}
		\label{fig: s adm}
	\end{figure}
\end{example}
\noindent 
Given an ADF $F=(A, L, C)$, an argument $a\in A$ and a  semantics $\sigma\in\{\cf, \adm, \prf,  \grd\}$,   argument $a$ is \textit{credulously acceptable (deniable)} under $\sigma$ if there exists a $\sigma$ interpretation $v$ of $F$ in which $a$ is acceptable ($a$ is deniable, respectively).

In ADFs, relations between arguments  can be classified into four types, reflecting the relationship of attack and/or support that exists between the arguments.  These are listed in  Definition \ref{def: relations}. Further,  we denote the update of an interpretation $v$ with a truth value $x\in\{\tvt, \tvf, \tvu\}$ for an argument $b$ by ${v}|^{b}_{x}$, i.e. ${v}|^{b}_{x}(b) = x$ and ${v}|^{b}_{x}(a) = v(a)$ for $a\neq b$.
\begin{definition}\label{def: relations}
	Let $D=(S,L,C)$ be an ADF.
	A relation $(b,a)\in L$ is called 
	\begin{itemize}
		\item \emph{supporting} (in $D$)
		if for every two-valued interpretation $v$, $v(\varphi_a)=\tvt$ implies ${v}|^{b}_{\tvt}(\varphi_a)=\tvt$;
		\item \emph{attacking} (in $D$)
		if for every two-valued interpretation $v$, $v(\varphi_a)=\tvf$ implies ${v}|^{b}_{\tvt}(\varphi_a)=\tvf$;
		\item \emph{redundant} (in $D$) if it is both attacking and supporting;
		\item \emph{dependent} (in $D$) if it is neither attacking nor supporting.
	\end{itemize} 
	
\end{definition}

\noindent	In the current work we say that the truth value of $a$ \emph{is presented} in $v$, if $v(a)= \tvt/\tvf$.

\section{The Strongly Admissible semantics for ADFs}\label{sec: sadf}
\nop{	The notion of strong admissibility was first defined in \cite{DBLP:journals/ai/BaroniG07}  for abstract argumentation frameworks (AFs), based on the notion of strongly defended. Later in \cite{DBLP:conf/comma/Caminada14}   this concept was introduced without referring to strong defence. Further,   in \cite{DBLP:journals/argcom/CaminadaD19} a labelling account of strong admissibility was presented to answer the credulous decision problem of AFs under grounded semantics. It was shown that the strongly admissible labellings of each AF make a lattice with the grounded labelling as its top element. Thus, to answer the credulous decision problem of AFs under grounded semantics it is enough to investigate whether the argument in question belongs to a strongly admissible labelling of a   given AF.}
\nop{	In \cite{DBLP:journals/flap/Caminada17, DBLP:conf/comma/Caminada14}   it was shown that strong admissibility plays a critical role in discussion games for AFs under grounded semantics. That is,   the concept of strong admissibility semantics of AFs relates to grounded semantics of AFs in a similar way as the relation between admissible semantics of AFs and preferred semantics of AFs. That is, to answer the credulous decision problem of AFs under grounded semantics it is enough to solve the decision problem   for AFs under strongly admissible semantics. 
	
	In \cite{DBLP:conf/comma/KeshavarziVV20}, a discussion game was introduced to answer the credulous decision problem of ADFs under grounded semantics without constructing the full grounded interpretation of the given ADF. However, the concept of strongly admissible semantics of ADFs has not been introduced. 
	
	This was a motivation for us to present the notion of strongly admissible semantics for ADFs in this work. 
	However, studying  whether the game that is presented in \cite{DBLP:conf/comma/KeshavarziVV20} is equivalent to constructing a strongly admissible interpretation that satisfies the claim, in the given ADF, is beyond the topic of this work and is left for future research.}

In the following, we first  present the concept of strongly admissible semantics for ADFs. 
In ADFs, beside an argument being acceptable in an interpretation, there is a symmetric notion of an argument being deniable. Thus,   in  Definition \ref{def: a sadm w.r.t v A} we introduce the notion of strong acceptability/deniability  of an argument in an ADF with respect to a given interpretation.  In Theorem \ref{thm: sadf form a lattice}, we show that in a given ADF,  the set of  strongly admissible interpretations of $D$ make a lattice, with the unique minimal element $v_\tvu$ and the unique maximal element $\grd(D)$.

Note that in the following, $v_{|_P}$ is equal to $v(p)$ for any $p\in P$, however, it assigns  all other arguments  that do not belong to $P$ to $\tvu$. Further, in Definition \ref{def: a sadm w.r.t v A} set $S$ contains the ancestors of $a$
the truth value of which are presented in $v$, that have an effect on the truth value of $a$ in $v$. This is similar to Definition \ref{def: sdefended of AFs}, in which set $S$ contains the defenders of $a$. 
In the first item of Definition \ref{def: a sadm w.r.t v A}, set $P$ contains exactly those parents of $a$, excluding $a$, that satisfy $v(a)$ and of which the truth value is presented in  $v$. 
\begin{definition}\label{def: a sadm w.r.t v A}
	Let $D=(A, L, C)$ be an ADF and let $v$ be an interpretation of $D$. Argument $a$ is a strongly acceptable/deniable  argument with respect to interpretation $v$ and set  $S$ if the following conditions hold.  
	\begin{itemize}
		\item Let $E= \{a\}$. 	There exists a subset of parents of $a$ excluding $a$, namely $P\subseteq (\parents a\cap S)\setminus E$ such that $\varphi_a^{v_{|_P}}\equiv \top$ if $v(a)= \tvt$ and $\varphi_a^{v_{|_P}}\equiv\bot$ if $v(a)= \tvf$.
		\item Each $p\in P$, with $P$ that satisfies the first item,  is strongly acceptable/deniable with respect to interpretation $v$ and set  $S $ such that $E := E\cup \{ p\}$.
	\end{itemize}
	
\end{definition}

\noindent	Note that in Definition~\ref{def: a sadm w.r.t v A} to indicate whether an argument $a$ is strongly acceptable/deniable, we collect the set of ancestors of $a$ that affect the truth value of $a$ in   set $S$.  If the set of parents of an argument, namely $P$,   is an empty set, then $v|_P = v_\tvu$.
In Definition~\ref{def: strong adm} the concept of strong admissibility of an interpretation of a given ADF  is introduced. 

\begin{definition}\label{def: strong adm}
	Let $D= (A, L, C)$ be an ADF and let $v$ be an interpretation of $D$. An interpretation $v$ is a strongly admissible interpretation if for each $a$ such that $v(a)= \tvt/\tvf$, then $a$ is  a strongly acceptable/deniable  argument with respect to $v$ and  set $S$. 
	
\end{definition}

\noindent	These notions are clarified in Example \ref{exp: s adm}. Note that  set $S$ in Definitions \ref{def: a sadm w.r.t v A} and \ref{def: strong adm} can be the empty set. Example \ref{exp: sadf with redundant} is an instance of strong acceptability of an argument with $S= \{\}$.

\begin{example}\label{exp: s adm}
	Let $D= (\{a,b,c,d\}, \{\varphi_a: \top, \varphi_b: a\land \neg c, \varphi_c: \neg b \land d, \varphi_d: \bot \})$, depicted in Figure~\ref{fig: s adm}. Let $v= \{a\mapsto\tvu, b\mapsto \tvt, c\mapsto\tvf, d\mapsto\tvf\}$. 	We show that $c$ is strongly deniable with respect to $v$ and  set $S= \{ d\}$. To satisfy the first condition of Definition \ref{def: a sadm w.r.t v A}, we choose  the subset of parents of $c$ excluding $c$ equal to $\{d\}$. It is easy to check that $\varphi_c^{v_{|_d}}\equiv \bot$. In this step $E=\{c\}$. To check the second condition of Definition \ref{def: a sadm w.r.t v A}, we have to show that $d$ is also a strongly deniable argument. To this end, by the definition $E$ extends to $E:= E\cup \{d\}$. Further,  clearly $\varphi_d^{v_\tvu}\equiv \bot$.  Thus, $c$ is strongly deniable with respect to $v$ and set $S=\{ d\}$. In other words, set $S$ indicates a parent of $c$, namely $d$ that has affect on the truth value of $c$ in $v$. 
	
	On the other hand, $c$ is not strongly deniable with respect to $v$ and set $S= \{ b\}$. The reason is as follows. Although the first condition of Definition \ref{def: a sadm w.r.t v A} is satisfiable, that is, $\varphi_c^{v_{|_b}}\equiv\bot$,  the second condition is not satisfiable, i.e.\ $b$ is not strongly acceptable with respect to $v$. Toward a contradiction, assume  that $b$ is strongly acceptable w.r.t.  $v$. Thus, we have to choose a parent of $b$ that does not belong to $E=\{c, b\}$, namely $a$ and we have to show that $\varphi_b^{v_{|_a}}\equiv\top$.   However, $\varphi_b^{v_{|_a}}\not\equiv\top$. Therefore, $b$ is not strongly acceptable with respect to $v$.  
	
	Note that $c$ is also strongly acceptable with respect to $v$ and $S= \{c, d\}$.	In other words, $S=\{d\}$ is the least subset of $A$ that satisfies the conditions of Definition \ref{def: a sadm w.r.t v A} for $c$.

\end{example}
\noindent	Example \ref{exp: sadf with redundant} is an instance of ADFs with a redundant link.

\begin{example}\label{exp: sadf with redundant}
	Let $D= (\{a, b\},  \{\varphi_a: b\lor\neg b, \varphi_b:b\})$ be an ADF, depicted in Figure \ref{fig:  sadf with redundant}. We show that $v= \{a\mapsto\tvt, b\mapsto\tvu\}$ is a strongly admissible interpretation of $D$. To this end, we show that $a$ is strongly acceptable with respect to $v$ and $S= \{\}$. It is clear that $P\subseteq(\parents a\cap S)$ is the empty set and  $\varphi_a^{v_\tvu}$ is irrefutable. Thus, $S= \{\}$ satisfies the conditions of Definition \ref{def: a sadm w.r.t v A} for $a$. That is, $a$ is strongly acceptable with respect to $v$ and $S=\{\}$.
	\begin{figure}[h]
		
		\centering
		\begin {tikzpicture}[black!70, >=stealth,node distance=0.6cm,
		thick,main node/.style={fill=none,draw,minimum size = 0.4cm,font=\normalsize\bfseries},
		condition/.style={fill=none,draw=none,font=\small\bfseries}, scale=0.9]
		\path
		(0,0)node[circle, draw, minimum size = 0.7cm] (a) {$a$}
		++(2.,0) node[circle, draw, minimum size = 0.7cm] (b) {$b$};
		
		\path[thick, ->] 
		(b) edge  (a);
		\path [loop left,thick, distance=0.9cm, in =45, out=310, ->] (b) edge (b);

		\node[condition](cs) [above of= a, yshift=0.1cm] {$b\lor\neg b $};
		\node[condition](cs) [above of= b, yshift=0.1cm, xshift = -0 cm] {$b$};
	\end{tikzpicture}
	\caption{ADF of Examples~\ref{exp: sadf with redundant}}
	\label{fig:  sadf with redundant}
\end{figure}
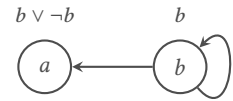

\end{example}

\nop{\begin{definition}\label{def: level}
	The level of an argument $a$ of an ADF $D$,  is the number of links on the shortest path from an initial argument to $a$ plus $1$.
\end{definition}

\begin{lemma}\label{lem: sadm finte number}
	Let $D$ be an ADF without any redundant links, let $v$ be an interpretation of $D$ and let $a$ be an argument that is strongly acceptable/deniable with respect to $v$ and  set $S$. Then $a$ has a finite level in $D$.
\end{lemma}
\begin{proof}
	Assume that $a$ is an argument that is strongly acceptable/deniable in $D$ with respect to $v$ and  set $S$. 
	Toward a contradiction, assume that $a$ is an argument  of which the level is infinite.  Since the level of $a$ is infinite, $\parents a \not=\{\}$. By the first condition of Definition \ref{def: a sadm w.r.t v A} there exists a subset of parents of $a$, namely $P$ that satisfies the value of $a$ in $v$. Since $D$ does not have any redundant link, $P$ is a non-empty set. By the second condition of Definition \ref{def: a sadm w.r.t v A}, each argument of $P$ is also strongly admissible with respect to $v$ and set $S$. Since $a$ has an infinite level, there exists $p\in P$ that also has an infinite level. By the same reason, $p$ has a parent  with infinite level that is neither equal to $a$ nor  is it equal to  $p$. Thus, $a$ has infinite number of ancestors. This is a contradiction by the assumption that the $D$ is a finite ADF. Thus, the assumption that $a$ has an infinite level is wrong. Hence, if $a$ is strongly admissible with respect to $v$, then the level of $a$ is finite.
\end{proof}
\noindent	Note that in Lemma \ref{lem: sadm finte number}, the condition that $D$ does not have any redundant links is a necessary condition. Otherwise, the levels of arguments $a$ and $b$ in ADF of Example \ref{exp: sadf with redundant} are infinite and $a$ is strongly acceptable with respect to $v= \{a\mapsto\tvt, b\mapsto\tvu\}$ and $S=\{\}$. The reason is that $D$ in Example \ref{exp: sadf with redundant} contains redundant link $(b, a)$. Although the set of parents of $a$ is non-empty, to satisfy the  conditions of Definition \ref{def: a sadm w.r.t v A}, $S$ can be the empty set.} 

\noindent As we presented earlier, for instance, in Example \ref{exp: s adm}, we are interested in finding a least set $S$ of ancestors of an argument in question that satisfies the conditions of Definition \ref{def: a sadm w.r.t v A}, presented in Definition \ref{def: least set}.
\begin{definition}\label{def: least set}
Let $a$ be an argument that is strongly acceptable/\newline deniable with respect to $v $  and $S$. We say that  $S$ is \textit{a least} set that satisfies the conditions of Definition \ref{def: a sadm w.r.t v A} for $a$ if there is no $S'$ with $|S'|<|S|$  such that $a$ is strongly acceptable/deniable with respect to $v$ and $S'$.
\end{definition}
\noindent	 For instance, in Example \ref{exp: s adm}, $S=\{d\}$ is the least set that satisfies the conditions of Definition \ref{def: a sadm w.r.t v A} for $c$. We define \textit{the maximum level} of $a$ in a least set $S$ recursively, as follows. 
\begin{definition}\label{def: level based on least S}
Let $D$ be an ADF and	let $a$ be strongly acceptable/deniable with respect to $v$ and a least set $S$, and let  $P\subseteq \{\parents a \cap S\}\setminus  \{a\}$ such that  $\varphi_a^{v|_P}\equiv\top/\bot$. The \emph{maximum level} of $a$ with respect to a least set $S$ is:
\begin{itemize}
	\item  If $P= \emptyset $, then the maximum level of $a$ in $S$ is $1$.
	\item If $P\not= \emptyset $ and the maximum of  the maximum level of an argument of $P$ in $S$ is $k$, then the level of $a$ with respect to $S$ is $k+1$.
\end{itemize} 
\end{definition}
\noindent		For instance, in Example \ref{exp: s adm}, the maximum level of $c$ with respect to $S=\{d\}$ is $2$. This is because the maximum level of $d$ with respect to $S$ is $1$.

Considering ADF $D$ 
of Example \ref{exp: sadf with redundant}, by Definition \ref{def: level based on least S} the maximum level of $a$ with respect to the least set $S= \{\}$ is one. 
Lemma \ref{lem: max finite}  shows that if $a$ is strongly acceptable/deniable with respect to $v$ and $S$, then the maximum level of $a$ is finite in any given ADF.
\begin{lemma}\label{lem: max finite}
Let $D$ be an ADF, let $v$ be an interpretation of $D$ and let $a$ be an argument that is strongly acceptable/deniable with respect to $v$ and  a least set $S$. Then $a$ has a finite maximum level in in $S$.
\end{lemma}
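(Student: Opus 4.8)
The plan is to exploit the only object here that is finite by fiat, namely the set $S$: since $S\subseteq A$ and $A$ is finite, $S$ is finite, say $|S|=n$. First I would make explicit the tree underlying Definition \ref{def: a sadm w.r.t v A}. Applying that definition to $a$ produces a witnessing tree $T$ whose root is $a$ equipped with a set $P_a\subseteq(\parents a\cap S)\setminus\{a\}$ with $\varphi_a^{v_{|_{P_a}}}\equiv\top$ (resp.\ $\equiv\bot$), and in which every node labelled $b$ has as children exactly the arguments of the set $P_b$ witnessing that $b$ is strongly acceptable/deniable at that point, a node being a leaf precisely when its $P$ is empty. By Definition \ref{def: level based on least S}, the maximum level of $a$ with respect to $S$ is exactly the height of $T$ plus one, so it suffices to bound the height of $T$.

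Next I would follow the exclusion set $E$ down a branch. Reading Definition \ref{def: a sadm w.r.t v A} carefully, when a branch of $T$ passes through $a=b_0,b_1,\dots,b_k$ the value of $E$ at $b_k$ is $\{b_0,\dots,b_k\}$, and the children set $P_{b_k}$ is forced to lie in $(\parents{b_k}\cap S)\setminus E$; hence any child $b_{k+1}$ of $b_k$ differs from all of $b_0,\dots,b_k$. So the arguments along any branch of $T$ are pairwise distinct, and moreover every $b_i$ with $i\geq 1$ lies in $S$, because $b_i\in P_{b_{i-1}}\subseteq\parents{b_{i-1}}\cap S$. Therefore a branch contains at most $n+1$ arguments, the height of $T$ is at most $n$, and the maximum level of $a$ with respect to $S$ is at most $n+1$, which is finite.

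The routine points to discharge are that $T$ is finitely branching --- each $P_b$ is a subset of the finite set $\parents{b}\cap S$, so the maximum in the second clause of Definition \ref{def: level based on least S} ranges over a finite set and is well defined --- and that the correspondence between the height of $T$ and the maximum level is exactly the recursion of Definition \ref{def: level based on least S} (leaves receive level $1$, an internal node one more than the maximum of the levels of its children), which is a one-line induction on subtree height. The step I expect to require the most care in the write-up is the bookkeeping of $E$ along a branch: one must be explicit that $E$ is local to a branch --- the same argument may legitimately reappear in two sibling subtrees --- but is strictly increasing along any single branch, and it is exactly this non-repetition together with $\{b_1,\dots,b_k\}\subseteq S$ that yields the bound. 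Note that, in contrast with the level notion used in the discarded lemma, no hypothesis forbidding redundant links is needed here: finiteness of $S$ plus non-repetition along a branch suffices, and in particular the bound applies to the least set $S$.
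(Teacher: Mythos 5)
Your proof is correct and rests on the same key mechanism as the paper's: the exclusion set $E$ grows along any descending chain of witnessing parents, so the arguments on such a chain are pairwise distinct, and finiteness then bounds the chain length. The paper phrases this as a proof by contradiction (an infinite maximum level would force infinitely many distinct ancestors, contradicting finiteness of $A$), whereas you argue directly and extract the sharper explicit bound $|S|+1$; this is a presentational difference, not a different approach.
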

\begin{proof}
Toward a contradiction assume that $a$ is an argument with infinite maximum level in $S$. Therefore, by Definition \ref{def: level based on least S}, the set of parents of $a$, namely $P$ with $\varphi_a^{v_{|_P}}$ is a non-empty set. Further, there exists  an argument $p$ in $P\setminus\{a\}$ with infinite  maximum level. By the same reason $p$ has a parent  with infinite maximum level that is neither equal to $a$ nor $p$. Thus, $a$ has an infinite number of ancestors. This is a contradiction by the assumption that the $D$ is a finite ADF. Thus, the assumption that $a$ has an infinite maximum level is wrong. 
\end{proof}
\begin{lemma}\label{lem: sadm in greater int}
Let $D$ be an ADF.  If $a\in A$ is strongly acceptable/
deniable with respect to  interpretation $v$ of $D$ and  a least set $S$ and $v\leq_i v'$, then $a$ is also strongly acceptable/deniable with respect to $v'$ and a least set $S$.
\end{lemma}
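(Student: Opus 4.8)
The plan is to prove the substantive assertion, namely that $a$ is strongly acceptable/deniable with respect to $v'$ and the same set $S$, by induction on the maximum level $k$ of $a$ with respect to $S$ --- equivalently, on the depth of the recursion of Definition~\ref{def: a sadm w.r.t v A} that witnesses strong acceptability/deniability of $a$ --- which is a finite natural number by Definition~\ref{def: level based on least S} and Lemma~\ref{lem: max finite} (and is finite in any case, since $D$ is finite and the bookkeeping set $E$ strictly grows along the recursion). The one fact that does all the work is that a presented truth value is $\leq_i$-maximal: if $v(b)\in\{\tvt,\tvf\}$ and $v\leq_i v'$, then $v'(b)=v(b)$. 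Two consequences will be used. First, being strongly acceptable/deniable with respect to $v$ presupposes that $v(a)$ is presented, so $v'(a)=v(a)$ and the ``acceptable'' versus ``deniable'' alternative is inherited by $v'$. Second, whenever $P$ is a set of arguments all presented in $v$, the interpretations $v_{|_P}$ and ${v'}_{|_P}$ coincide (they agree on $P$ and both send every other argument to $\tvu$), so $\varphi_a^{v_{|_P}}$ and $\varphi_a^{{v'}_{|_P}}$ are literally the same partial valuation.

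In the base case $k=1$, Definition~\ref{def: level based on least S} forces the witnessing parent set to be $P=\emptyset$, so the first condition of Definition~\ref{def: a sadm w.r.t v A} reads $\varphi_a^{v_\tvu}\equiv\top$ when $v(a)=\tvt$ (resp.\ $\varphi_a^{v_\tvu}\equiv\bot$ when $v(a)=\tvf$); this mentions no interpretation at all, and since $v'(a)=v(a)$ the same $P=\emptyset$ witnesses strong acceptability/deniability of $a$ with respect to $v'$ and $S$, the second condition being vacuous. In the inductive step $k\geq 2$, fix the nonempty $P\subseteq(\parents a\cap S)\setminus\{a\}$ supplied by Definition~\ref{def: a sadm w.r.t v A} for $v$, so $\varphi_a^{v_{|_P}}\equiv\top$ (resp.\ $\bot$) and every $p\in P$ is strongly acceptable/deniable with respect to $v$ and $S$. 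Each such $p$ has its truth value presented in $v$, so by the second consequence above $v_{|_P}={v'}_{|_P}$, whence $\varphi_a^{{v'}_{|_P}}\equiv\top$ (resp.\ $\bot$): the first condition of Definition~\ref{def: a sadm w.r.t v A} holds for $v'$ with the very same $P$. Moreover every $p\in P$ is witnessed at strictly smaller recursion depth (equivalently, has strictly smaller maximum level with respect to $S$ by Definition~\ref{def: level based on least S}), so the induction hypothesis yields that each $p$ is strongly acceptable/deniable with respect to $v'$ and $S$; since $P$ and $E$ are left unchanged, the second condition of Definition~\ref{def: a sadm w.r.t v A} for $v'$ follows, and $a$ is strongly acceptable/deniable with respect to $v'$ and $S$.

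The point that needs care is the adjective ``least'' in the conclusion. The induction above reuses the \emph{same} witnessing parent set at every stage, which is exactly why the old $S$ still witnesses strong acceptability/deniability under $v'$; but passing to the more informative $v'$ can make a strictly smaller witness available --- for example, an argument that is undecided in $v$ and presented in $v'$ can short-circuit part of the recursion that previously had to pass through all of $S$ --- so $S$ need not remain of minimum cardinality. Accordingly I would read ``a least set $S$'' in the conclusion as ``the set $S$'' (a least set for $v$), and state and use the lemma in that form. No other step is problematic; in particular monotonicity of $\Gamma_F$ is not invoked, since the equality $v_{|_P}={v'}_{|_P}$ makes the relevant acceptance conditions identical rather than merely $\leq_i$-comparable.
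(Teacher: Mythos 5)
Your proof is correct and follows essentially the same route as the paper's: induction on the maximum level of $a$ with respect to $S$, reusing the very same witness set $P$ at each stage. You are in fact slightly more explicit than the paper on the key step (justifying the transfer of the first condition via $v_{|_P}={v'}_{|_P}$, since presented truth values are $\leq_i$-maximal), and your caveat that $S$ need not remain a \emph{least} set for $v'$ --- so the conclusion should be read as ``the set $S$'' --- is a fair correction of the statement's wording that the paper's own proof glosses over.
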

\begin{proof}
Since $a$ is strongly acceptable/deniable with respect to $v$ and  $S$, there exists $P\subseteq (\parents a\cap S) \setminus E$ that satisfies the first condition of Definition \ref{def: a sadm w.r.t v A}.  Since $v\leq_i v'$ the same set of parents of $a$, namely $P$ guarantees that the first condition of Definition \ref{def: a sadm w.r.t v A} holds for $a$ with respect to $v'$ and $S$. 

Assume that $S$ is also a least set that satisfies the conditions of the current lemma. We show that the second condition of  Definition \ref{def: a sadm w.r.t v A} works  by induction on the maximum level of argument $a$ in $S$.

Base case: let $a$ be an argument of the maximum  level one that is  strongly acceptable/deniable  with respect to $v$ and $S$. Therefore, $\varphi_a^{v_{\tvu}}\equiv\top/\bot$. 
Thus, $a$ is clearly strongly acceptable/deniable  with respect to  $v'$ and $S$.

Inductive step:
Assuming that this property holds for each argument of the maximum level   $j$ with $1
\leq j < i$ in $S$, i.e.,  if $a$ is an argument with the maximum level $j$ in $S$ that is strongly acceptable/deniable with respect to \ $v$ and $S$, then $a$ is strongly acceptable/deniable with respect to  $v'$ and $S$. We show that this property also holds for arguments of level $i$. Let $a$ be an argument of the maximum  level $i$. By Definition \ref{def: a sadm w.r.t v A}, there exists the set of parents of $a$, namely $P$, that satisfies the conditions of the definition with respect to  $v$ and set $S$. We claim that this $P$ also satisfies the conditions of the definition for $a$ w.r.t.  $v'$ and $S$. By Definition \ref{def: level based on least S}, the maximum level of each $p\in P$ is at most  $i-1$. Thus, by induction  hypothesis $p$ is strongly acceptable/deniable with respect to  $v'$ and set $S$. Therefore, the second condition of Definition \ref{def: a sadm w.r.t v A} also holds. Thus, $a$ is strongly acceptable/deniable  with respect to  $v'$ and $S$.
\end{proof}
\noindent A sequence of interpretations, for a given ADF $D$, is presented in Lemma \ref{lem: constructing Sadm}, each member of which is strongly admissible. In Lemma \ref{prop: big and smal sadm} it is shown that the maximum element of this sequence is the grounded interpretation of $D$.	
\begin{lemma}\label{lem: constructing Sadm}
Let $D$ be an ADF, let $v_0= v_\tvu$ and let $v_i= \Gamma_D(v_{i-1})$ for $i>0$. For each $0\leq i$ it holds that
\begin{itemize}
	\item $v_i\leq_i v_{i+1}$,
	\item $v_i$ is a strongly admissible interpretation of $D$.
\end{itemize}
\end{lemma}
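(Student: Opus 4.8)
The plan is to prove both items by induction on $i$, the second leaning on the first. The first item is routine: $v_0=v_\tvu$ is the $\leq_i$-least interpretation, so $v_0\leq_i v_1$; and if $v_{i-1}\leq_i v_i$, then since $\Gamma_D$ is monotone, $v_i=\Gamma_D(v_{i-1})\leq_i\Gamma_D(v_i)=v_{i+1}$. In particular $v_{j-1}\leq_i v_j$ for every $j\ge 1$, which I use freely below.

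For the second item I would again induct on $i$. The base case $v_0=v_\tvu$ is strongly admissible vacuously, as no argument has a presented truth value. For the step, assume $v_{i-1}$ is strongly admissible and let $a$ be an argument with $v_i(a)=\tvt$ (the case $v_i(a)=\tvf$ is symmetric, interchanging ``acceptable''/``deniable'' and $\top$/$\bot$). Monotonicity gives $v_{i-1}(a)\in\{\tvt,\tvu\}$. If $v_{i-1}(a)=\tvt$, then by the induction hypothesis $a$ is strongly acceptable with respect to $v_{i-1}$ and some set; fixing a least such set $S$ and using $v_{i-1}\leq_i v_i$, Lemma~\ref{lem: sadm in greater int} shows that $a$ is strongly acceptable with respect to $v_i$ and $S$, as required.

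The remaining case is $v_{i-1}(a)=\tvu$, so $v_i(a)=\tvt$ means, by the definition of $\Gamma_D$, that $\varphi_a^{v_{i-1}}$ is a tautology. Put $S:=\{b\in A\mid v_{i-1}(b)\in\{\tvt,\tvf\}\}$ and $P:=\parents a\cap S$; note $a\notin S$, hence $a\notin P$. Since every $b\in P$ is already decided in $v_{i-1}$ and $v_{i-1}\leq_i v_i$, the partial valuation $\varphi_a^{v_i|_P}$ equals $\varphi_a^{v_{i-1}}$ and is therefore a tautology; this establishes the first condition of Definition~\ref{def: a sadm w.r.t v A} for $a$ with this $S$ and $P$. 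For the second condition, each $p\in P$ is presented in $v_{i-1}$, so by the induction hypothesis $p$ is strongly acceptable/deniable with respect to $v_{i-1}$ and some set; a least such set $S_p$ consists only of arguments that the corresponding recursion of Definition~\ref{def: a sadm w.r.t v A} selects, all of which are presented in $v_{i-1}$, so $S_p\subseteq S$. By Lemma~\ref{lem: sadm in greater int}, $p$ is strongly acceptable/deniable with respect to $v_i$ and $S_p$, hence also with respect to $v_i$ and the larger set $S$, since enlarging the ambient set only relaxes the constraints of Definition~\ref{def: a sadm w.r.t v A}. Grafting the witnesses for the $p\in P$ beneath $a$ then yields a recursive witness that $a$ is strongly acceptable with respect to $v_i$ and $S$: this grafting only adds the element $a$ to the bookkeeping set $E$ at every node below $a$, and since each selected parent lies in $S$ while $a\notin S$, no exclusion constraint of Definition~\ref{def: a sadm w.r.t v A} is violated. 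Therefore, in both cases, Definition~\ref{def: strong adm} holds at $a$, and $v_i$ is strongly admissible.

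I expect the main obstacle to be precisely this assembly step. Definition~\ref{def: a sadm w.r.t v A} demands a \emph{single} set $S$ that witnesses $a$ and, recursively, each contributing parent, while the set $E$ keeps growing as the recursion descends, so a priori adding $a$ to $E$ could invalidate a parent selected deep inside the recursion for some $p\in P$. Choosing $S$ to be exactly the set of arguments presented in $v_{i-1}$ is what makes this go through: it contains every $S_p$ (by Lemma~\ref{lem: sadm in greater int} and the fact that least witnessing sets list only decided arguments) yet excludes $a$, so extending $E$ by $a$ forbids nothing the recursion needs. Beyond this bookkeeping the proof only uses monotonicity of $\Gamma_D$, Lemma~\ref{lem: sadm in greater int} (which itself rests on the finiteness of maximum levels, Lemma~\ref{lem: max finite}), and the definitions.
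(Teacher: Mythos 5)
Your proof is correct and follows essentially the same route as the paper's: monotonicity of $\Gamma_D$ for the first item, and for the second an induction on $i$ that splits on whether $a$ is already decided in $v_{i-1}$, extracts $P$ from the irrefutability of $\varphi_a^{v_{i-1}}$, and handles the parents via the induction hypothesis together with Lemma~\ref{lem: sadm in greater int}. You are in fact more explicit than the paper about the choice of the ambient set $S$ and the $E$-bookkeeping when assembling the parents' witnesses under $a$ — a detail the paper's proof leaves implicit — but this is a refinement of the same argument, not a different one.
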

\begin{proof}
\begin{itemize}
	\item The first item holds because the characteristic operator is a monotonic function.
	\item We show that each $v_i$ is a strongly admissible interpretation by induction on $i$. 
	
	Base case: For $i = 0$, it is clear that  $v_0 = v_\tvu$ is a strongly admissible interpretation. 
	
	Inductive step: Assume that $v_j$ for $j$ with $0\leq j< i$ is a strongly admissible interpretation. We show that $v_i$ is a strongly admissible interpretation. Let $a$ be an argument that is assigned to either $\tvt$ or $\tvf$ in $v_i$. If  $a\mapsto\tvt/\tvf\in v_{i-1}$, there is nothing to prove, since by the induction assumption $v_{i-1}$ is a strongly admissible interpretation. Assume that $a\mapsto\tvt\in v_i$ and $a\mapsto\tvu\in v_{i-1}$. We show that $a$ is strongly acceptable with respect to $v_i$ and  set $S$. For the case that $a\mapsto\tvf\in v_i$, the proof follows a similar method. Since $v_i(a)= \tvt$, we can conclude that $\varphi_a^{v_{i-1}}$ is irrefutable. Let $P$ be a subset of parents of $a$  the truth value of which  appears in $v_{i-1} $ 
	and $\varphi_a^{{v_{i-1}}_{|P}}\equiv\top$. 
	Otherwise, $\varphi_a^{v_{i-1}}$ cannot be irrefutable.  Thus, the first condition of Definition \ref{def: a sadm w.r.t v A} holds. 
	
	To show the second condition of Definition \ref{def: a sadm w.r.t v A}, assume that  $P\not= \{\}$. Otherwise, there is nothing to prove.  Let $p\in P$. By the induction assumption, $v_{i-1}$ is a strongly admissible interpretation. Since $p\mapsto\tvt/\tvf\in v_{i-1}$ for each $p\in P$,  $p$ is strongly acceptable/deniable with respect to  $v_{i-1}$ and  set $S$. Thus, by the  monotonicity of the characteristic operator,   $p$ is strongly acceptable/deniable with respect to $v_{i}$ and $S$. 
	Thus, the second condition of Definition \ref{def: a sadm w.r.t v A}  holds, as well.  That is, arbitrary argument $a$ is strongly acceptable with respect to $v_i$ and $S$. Thus, $v_i$ is a strongly admissible interpretation. Hence, every interpretation in the sequence $v_{\tvu}, \Gamma_D(v_\tvu), \dots$ is a strongly admissible interpretation. 
\end{itemize}
\end{proof}

\begin{lemma}\label{prop: big and smal sadm}
Let $D$ be an ADF.
\begin{itemize}
	\item $D$ has at least one strongly admissible interpretation.
	\item The least strong admissible interpretation of $D$, with respect to the $\leq_i$ ordering, is the trivial interpretation.
	\item The biggest strongly admissible interpretation, with respect to the  $\leq_i$ ordering, is the unique grounded interpretation of $D$. 
\end{itemize}
\end{lemma}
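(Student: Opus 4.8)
The plan is to handle the three items in order, using Lemma~\ref{lem: constructing Sadm} for the existence claim and for the least element, and a level-induction (patterned on the proof of Lemma~\ref{lem: sadm in greater int}) to show that the grounded interpretation is the $\leq_i$-greatest strongly admissible one. Throughout I set $v_0 = v_\tvu$ and $v_i = \Gamma_D(v_{i-1})$ for $i>0$, as in Lemma~\ref{lem: constructing Sadm}.

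For the first two items, note that $v_0 = v_\tvu$ is strongly admissible: this is the base case of Lemma~\ref{lem: constructing Sadm}, and indeed Definition~\ref{def: strong adm} holds vacuously for $v_\tvu$, since no argument is mapped to $\tvt$ or $\tvf$. Hence $D$ has at least one strongly admissible interpretation, which settles item one. For item two, it remains to observe that $v_\tvu \leq_i v$ for \emph{every} interpretation $v$ whatsoever, because $\tvu$ is the $\leq_i$-least truth value; in particular $v_\tvu$ lies $\leq_i$-below every strongly admissible interpretation, and being itself strongly admissible it is the $\leq_i$-least element of that set.

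For item three I would argue in two directions. First, $\grd(D)$ is strongly admissible: since $A$ is finite, the set $\mathcal{V}$ of interpretations is a finite $\leq_i$-poset, so the non-decreasing chain $v_0 \leq_i v_1 \leq_i \cdots$ of Lemma~\ref{lem: constructing Sadm} is eventually constant, say $v_n = \Gamma_D(v_n)$; a routine Kleene/Knaster--Tarski argument (monotonicity of $\Gamma_D$ together with $v_\tvu$ being the least interpretation) identifies $v_n$ as the least fixed point of $\Gamma_D$, i.e.\ $v_n = \grd(D)$, and $v_n$ is strongly admissible by Lemma~\ref{lem: constructing Sadm}. Second, every strongly admissible $v$ satisfies $v \leq_i \grd(D)$; it suffices to show $\grd(D)(a) = v(a)$ for each $a$ with $v(a) \in \{\tvt,\tvf\}$. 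Fix such an $a$; strong admissibility of $v$ yields a least set $S$ witnessing that $a$ is strongly acceptable/deniable w.r.t.\ $v$, and by Lemma~\ref{lem: max finite} the maximum level $\ell$ of $a$ in $S$ is finite. I would induct on $\ell$ to prove $a \mapsto v(a) \in v_\ell$, which suffices because $v_\ell \leq_i v_n = \grd(D)$ (the chain being eventually constant). For $\ell = 1$ the witnessing parent set $P$ is empty, so $\varphi_a^{v_\tvu} \equiv \top$ (resp.\ $\equiv \bot$), hence $\Gamma_D(v_\tvu)(a) = v(a)$, i.e.\ $a \mapsto v(a) \in v_1$. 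For $\ell > 1$ we have $P \neq \emptyset$ with $\varphi_a^{v|_P} \equiv \top$ (resp.\ $\equiv \bot$), and by Definition~\ref{def: level based on least S} each $p \in P$ is strongly acceptable/deniable w.r.t.\ $v$ with maximum level at most $\ell - 1$, so the induction hypothesis gives $p \mapsto v(p) \in v_{\ell-1}$ for all $p \in P$; hence $v|_P \leq_i v_{\ell-1}$, so $\Gamma_D(v|_P) \leq_i \Gamma_D(v_{\ell-1}) = v_\ell$ by monotonicity, and since $\varphi_a^{v|_P} \equiv \top$ (resp.\ $\equiv \bot$) forces $\Gamma_D(v|_P)(a) = v(a)$ while $v(a)$ is $\leq_i$-maximal, we get $v_\ell(a) = v(a)$, i.e.\ $a \mapsto v(a) \in v_\ell$. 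Together the two directions show $\grd(D)$ is the $\leq_i$-greatest strongly admissible interpretation.

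I expect the only genuine obstacle to be the bookkeeping in the inductive step of the last argument: one must be careful that when the least set $S$ chosen for $a$ is re-used for its parents $p \in P$, the quantity the induction runs on is the ``maximum level of $p$ in $S$'' in the sense of Definition~\ref{def: level based on least S}, and that this quantity is finite and strictly smaller than $\ell$ for every $p \in P$. Finiteness of $D$ --- hence of every chain of ancestors, and of $S$ itself --- is what keeps this recursion well-founded, exactly as in the proof of Lemma~\ref{lem: sadm in greater int}; the remaining steps are routine.
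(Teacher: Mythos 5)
Your proof is correct, and for the first two items it coincides with the paper's: both reduce to the observation that $v_\tvu$ is strongly admissible (the base case of Lemma~\ref{lem: constructing Sadm}) and is $\leq_i$-below every interpretation. For the third item you do genuinely more than the paper does at this point. The paper's proof of the lemma only establishes that $\grd(D)$ \emph{is} strongly admissible, by noting that the chain $\Gamma_D^n(v_\tvu)$ consists of strongly admissible interpretations and stabilizes at the least fixed point; the claim that every strongly admissible interpretation lies $\leq_i$-below $\grd(D)$ is not argued inside this lemma at all, but is deferred to Theorem~\ref{thm: each w in less than a Sadm}, whose proof is essentially the level-induction you carry out inline (induct on the maximum level of $a$ in a least witnessing set $S$, show $a\mapsto v(a)\in v_\ell$, and use monotonicity of $\Gamma_D$ in the inductive step). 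So your version is self-contained and actually proves the ``biggest'' claim where the paper's proof leaves it implicit; the cost is that you duplicate work the paper packages as a separate theorem, which it also needs later for the well-definedness of the join and for Theorem~\ref{thm: sadf form a lattice}. The only caveat --- which you flag yourself --- is the bookkeeping that the maximum level of each $p\in P$ in $S$ is finite and strictly smaller than that of $a$; this rests on Definition~\ref{def: level based on least S} together with Lemma~\ref{lem: max finite}, and the paper's own proofs are no more careful about this point than yours is.
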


\begin{proof}
\begin{itemize}
	\item The first and the second items of the lemma are clear by Lemma \ref{lem: constructing Sadm}, which says that $v_\tvu$ is a strongly admissible interpretation.
	\item By Definition, the grounded interpretation of $D$ is the least fixed-point of the characteristic operator over $v_\tvu$ with respect to the $\leq_i$-ordering. By Lemma \ref{lem: constructing Sadm}, each $\Gamma_D^n(v_\tvu)$ is a strongly admissible interpretation. Thus, the least fixed-point of $\Gamma_D^n(v_\tvu)$  is also a strongly admissible interpretation.  Note that,  the $n$th power off $\Gamma_D$ is defined inductively, that is, $\Gamma_D^n= \Gamma_D(\Gamma_D^{n-1})$.
\end{itemize}
\end{proof}
\noindent	In Theorem \ref{thm: sadm is adm and cf} we show that each strongly admissible interpretation is  an admissible interpretation as well as conflict-free. However, the other direction of the following theorem does not work. For instance, let $D=(\{a, b\}, \{\varphi_a: \neg b\lor a, \varphi_b: \neg a\})$ be a given ADF. The interpretation $v =\{a\mapsto \tvf, b\mapsto\tvt\}$ is an admissible interpretation of $D$, however, neither $a$ nor $b$ is strongly admissible with respect to $v$. Thus, $v$ is not a strongly admissible interpretation of $D$. 
Further, $v' =\{a\mapsto \tvu, b\mapsto\tvt\} $ is a conflict-free interpretation of $D$ that is neither an admissible nor a strongly admissible interpretation. The only strongly admissible interpretation of $D$, which is also the grounded interpretation of $D$, is the trivial interpretation.
\begin{theorem}\label{thm: sadm is adm and cf}
Let $D= (A, L, C)$ be an ADF and let $v$ be a strongly admissible interpretation of $D$. Then the following hold:
\begin{itemize}
	\item $v$ is an admissible interpretation of $D$.
	\item $v$ is a conflict-free interpretation of $D$.
\end{itemize}

\end{theorem}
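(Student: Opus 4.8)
The plan is to reduce both parts to a single observation: \emph{if $v$ is strongly admissible and $v(a)=\tvt$ (respectively $v(a)=\tvf$), then $\varphi_a^v$ is a tautology (respectively a contradiction).} Granting this, admissibility is immediate: for every $a\in A$ we get $v(a)\leq_i\Gamma_D(v)(a)$, since the inequality is trivial when $v(a)=\tvu$, and when $v(a)=\tvt/\tvf$ the observation yields $\Gamma_D(v)(a)=\tvt/\tvf=v(a)$; hence $v\leq_i\Gamma_D(v)$. Conflict-freeness then also follows, because a tautology is satisfiable and a contradiction is unsatisfiable, which are exactly the two requirements in Definition \ref{def:sem:adfs}; in fact, conflict-freeness is here just a weakening of admissibility.

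To prove the observation, note that only the \emph{first} clause of Definition \ref{def: a sadm w.r.t v A} is needed (the recursive second clause plays no role). Suppose $v(a)=\tvt$. Since $v$ is strongly admissible, $a$ is strongly acceptable with respect to $v$ and some set $S$, so there exists $P\subseteq(\parents a\cap S)\setminus\{a\}$ with $\varphi_a^{v_{|_P}}\equiv\top$, i.e.\ $\Gamma_D(v_{|_P})(a)=\tvt$. The next step is to observe that $v_{|_P}\leq_i v$: on the arguments of $P$ the two interpretations agree, and on all other arguments $v_{|_P}$ takes the value $\tvu$, which is $\leq_i$-below every truth value. By monotonicity of the characteristic operator, $\Gamma_D(v_{|_P})\leq_i\Gamma_D(v)$, so $\tvt=\Gamma_D(v_{|_P})(a)\leq_i\Gamma_D(v)(a)$, and since $\tvt$ is $\leq_i$-maximal this forces $\Gamma_D(v)(a)=\tvt$; that is, $\varphi_a^v$ is a tautology. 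The case $v(a)=\tvf$ is entirely symmetric, using $\varphi_a^{v_{|_P}}\equiv\bot$ and concluding $\Gamma_D(v)(a)=\tvf$.

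The argument is short, and I do not anticipate a real obstacle. The one point to be careful about is that strong admissibility only guarantees the \emph{existence} of a witnessing set $S$ (hence of a suitable set $P$ of parents of $a$), with no minimality claimed --- but existence is all that is used. As an alternative to invoking $\Gamma_D$, one could argue directly that substituting further truth values into a partial valuation that already reduces to $\top$ (or $\bot$) leaves it equal to $\top$ (or $\bot$); routing through the already-noted monotonicity of the characteristic operator is simply more concise.
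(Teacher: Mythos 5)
Your proposal is correct and follows essentially the same route as the paper's proof: both rest on the first clause of Definition \ref{def: a sadm w.r.t v A} giving a set $P$ with $\varphi_a^{v_{|_P}}\equiv\top/\bot$, the observation that this forces $\varphi_a^{v}$ to be irrefutable/unsatisfiable because $v_{|_P}\leq_i v$, and the standard fact that admissibility implies conflict-freeness. The only difference is presentational --- you argue directly and justify the key implication via monotonicity of $\Gamma_D$, whereas the paper phrases it as a proof by contradiction and asserts that implication without elaboration.
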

\begin{proof}
\begin{itemize}
	\item Let $v$ be a strongly admissible interpretation of $D$. We show that $v$ is an admissible interpretation. Toward a contradiction assume that $v$ is not an admissible interpretation, that is, $v\not\leq_i \Gamma_(D)(v)$. That is, there exists $a$ such that $v(a)= \tvt/\tvf$, but $\Gamma_D(v)(a)\not= \tvt/\tvf$. By the assumption $v$ is a strongly admissible interpretation. That is, if $v(a)=\tvt/\tvf$, then $a$ is strongly acceptable/deniable with respect to $v$ and set $S$. Thus, by the first item of Definition \ref{def: a sadm w.r.t v A}, there exists a subset of parents of $a$, namely $P$ such that $\varphi_a^{v_{|_P}}\equiv\top$ if $v(a)=\tvt$, and  $\varphi_a^{v_{|_P}}\equiv\bot$ if $v(a)=\tvf$. However, $\varphi_a^{v_{|_P}}\equiv\top$ implies that $\varphi_a^{v}$ is irrefutable and  $\varphi_a^{v_{|_P}}\equiv\top$ implies that $\varphi_a^{v}$ is unsatisfiable. The former implies if $v(a)=\tvt$, than $\Gamma_D(v)(a)=\tvt$ and the latter one implies that if $v(a)= \tvf$, then $\Gamma_D(v)(a)=\tvf$. This is a contradiction by the assumption that there exists $a$ such that $v(a)= \tvt/\tvf$, and $\Gamma_D(v)(a)\not= \tvt/\tvf$. Thus, the assumption that $v$ is not an admissible interpretation is wrong. Hence, if $v$ is a strongly admissible interpretation, then it is also an admissible interpretation.
	\item If $v$ is a strongly admissible interpretation, then by the first item of this theorem it is an admissible interpretation. By the fact that in ADFs every admissible interpretation is a conflict-free interpretation, we conclude that $v$ is a conflict-free interpretation, as well.
\end{itemize} 
\end{proof}

\subsection{The Strongly Admissible Interpretations of an ADF form a lattice }
\noindent		Although the sequence of interpretations presented in Lemma \ref{lem: constructing Sadm} produces a sequence of strongly admissible interpretations of a given ADF $D$, this sequence does not contain all of the strongly admissible interpretations of $D$.
For instance, in Example \ref{exp: s adm}, $v= \{a\mapsto \tvu, b\mapsto\tvu, c\mapsto\tvf, d\mapsto\tvf\}$ is a strongly admissible interpretation of $D$. However, $v$ is not equal to any of the elements of the sequence $v_\tvu, \Gamma_D(v_\tvu), \dots$ for $D$ given in Example \ref{exp: s adm}. However, Theorem \ref{thm: each w in less than a Sadm}, indicates that any strongly admissible interpretation of ADF $D$ is bounded by an element of the sequence of strongly admissible interpretations  presented in Lemma \ref{lem: constructing Sadm}.

\begin{theorem}\label{thm: each w in less than a Sadm}
Let $D$ be an ADF, let $w$ be an interpretation of $D$, and let $v_i$ for $0\leq i$ be the sequence of interpretations presented in Lemma \ref{lem: constructing Sadm}.  If $w$ is a strongly admissible interpretation of $D$, then there exists the least $0\leq m$ such that $ w \leq_i v_{m}$.  
\end{theorem}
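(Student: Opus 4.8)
The plan is to show that every argument presented in $w$ already receives its final truth value at some finite stage of the sequence $(v_i)$, with that stage bounded by a notion of ``level'' of the argument; taking the maximum of these bounds over the finitely many presented arguments then yields an index $m^*$ with $w\leq_i v_{m^*}$, after which the existence of a \emph{least} suitable $m$ is immediate, since any nonempty set of natural numbers has a minimum.

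Concretely, for each $a$ with $w(a)=\tvt$ or $w(a)=\tvf$ I would fix, using that $w$ is strongly admissible (Definition~\ref{def: strong adm}), a least set $S_a$ witnessing that $a$ is strongly acceptable/deniable with respect to $w$, and I would let $\ell(a)$ be the maximum level of $a$ in $S_a$, which is finite by Lemma~\ref{lem: max finite}. The heart of the argument is the following claim, proved by induction on $k\geq 1$: \emph{if $a$ is strongly acceptable/deniable with respect to $w$ and a set $S$ whose maximum level for $a$ (by the recursion of Definition~\ref{def: level based on least S}, which terminates for any such $S$ by the argument of Lemma~\ref{lem: max finite}) equals $k$, then $v_k(a)=w(a)$.} In the base case $k=1$ the witnessing parent set $P$ is empty, so $\varphi_a^{v_\tvu}\equiv\top$ when $w(a)=\tvt$ and $\varphi_a^{v_\tvu}\equiv\bot$ when $w(a)=\tvf$; since $v_1=\Gamma_D(v_0)=\Gamma_D(v_\tvu)$, this gives $v_1(a)=w(a)$.

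For the inductive step, assume $w(a)=\tvt$ (the case $w(a)=\tvf$ is symmetric) and that the maximum level of $a$ with respect to $S$ is $k\geq 2$. I would take the witnessing set $P\subseteq(\parents a\cap S)\setminus\{a\}$ with $\varphi_a^{w_{|_P}}\equiv\top$; by the second item of Definition~\ref{def: a sadm w.r.t v A} each $p\in P$ is strongly acceptable/deniable with respect to $w$ and $S$, satisfies $w(p)\in\{\tvt,\tvf\}$, and, by Definition~\ref{def: level based on least S}, has maximum level with respect to $S$ at most $k-1$. The induction hypothesis then yields $v_{k-1}(p)=w(p)$ for every $p\in P$, hence $w_{|_P}\leq_i v_{k-1}$, because the two interpretations agree on $P$ and $w_{|_P}$ assigns $\tvu$ to every other argument. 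Monotonicity of $\Gamma_D$ gives $\Gamma_D(w_{|_P})\leq_i\Gamma_D(v_{k-1})=v_k$, while $\varphi_a^{w_{|_P}}\equiv\top$ forces $\Gamma_D(w_{|_P})(a)=\tvt$; therefore $\tvt\leq_i v_k(a)$, that is $v_k(a)=\tvt=w(a)$, completing the induction.

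To finish, I would set $m^*=\max\{\ell(a)\mid w(a)\in\{\tvt,\tvf\}\}$, with $m^*=0$ if $w$ presents no argument (so that $w=v_\tvu=v_0$). Applying the claim with $S=S_a$ and $k=\ell(a)$ gives $v_{\ell(a)}(a)=w(a)$ for every presented $a$; since $\ell(a)\leq m^*$ and the sequence $(v_i)$ is $\leq_i$-increasing by Lemma~\ref{lem: constructing Sadm}, we obtain $w(a)=v_{\ell(a)}(a)\leq_i v_{m^*}(a)$, and for $a$ with $w(a)=\tvu$ this holds trivially; hence $w\leq_i v_{m^*}$. Consequently $\{m\geq 0\mid w\leq_i v_m\}$ is a nonempty set of natural numbers and so has a least element $m$, as required. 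The only real obstacle I anticipate is bookkeeping: one must make the notion of ``maximum level with respect to an arbitrary, not necessarily least, set $S$'' precise and check that it strictly decreases when passing from $a$ to the parents in $P$. Definitions~\ref{def: a sadm w.r.t v A} and~\ref{def: level based on least S}, together with Lemma~\ref{lem: max finite}, supply exactly what is needed, but some care is required because a least set for $a$ need not be a least set for its parents.
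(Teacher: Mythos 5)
Your proof is correct and follows essentially the same route as the paper: an induction on the maximum level of a presented argument within its witnessing set, using monotonicity of $\Gamma_D$ to show each such argument is decided by stage equal to its level, and then taking the maximum over the finitely many presented arguments. The only difference is in the final step, where you obtain the \emph{least} $m$ simply by well-ordering of the naturals, whereas the paper additionally argues that the chosen $m$ (the greatest maximum level) is itself that least index; your version is cleaner and fully suffices for the statement as written.
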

\begin{proof}
Let $\{a_1, \ldots, a_n\}$ be the set of arguments the truth values of which appear in $w$. Further, assume that each $a_i$ is strongly acceptable/deniable with respect to $w$ and a least  set $S_i$. Let $S= \bigcup_{i=1}^n S_i$. Let $a$ be an argument with the greatest maximum level $m$ in $S$. 
We claim that $w \leq_i v_m$.  
We have to show that if $a_i\mapsto\tvt/\tvf\in w$, then $a_i\mapsto\tvt/\tvf\in v_m$.  We show our claim 
by  induction on the maximum level of arguments in $S$.

Base case: If $a\mapsto\tvt/\tvf\in w$ and the maximum level of $a$ in $S$ is $1$, then it is clear that $w\leq_i \Gamma(v_0)=v_1$. Therefore,  $a\mapsto\tvt/\tvf\in v_m$.

As induction hypothesis,  assume that if $a\mapsto\tvt/\tvf\in w$ and the maximum level of $a$ in $S$ is $j$ with $1\leq j\leq k<m$, then $a\mapsto\tvt/\tvf\in v_j$ (and also $a\mapsto\tvt/\tvf\in v_m$).

Induction step:  Assume that $a$ is an argument that is strongly acceptable/deniable with respect to $w$ and the maximum level $a$ in $S$ is $k+1$. We have to show that $a\mapsto\tvt/\tvf\in v_{k+1}$ (and $a\mapsto\tvt/\tvf\in v_m$).  Since $a$ is strongly acceptable/deniable with respect to  $w$ and $S$ and the maximum level of $a$ in $S$ is $k+1$, there exists a non-empty set $P\subseteq\parents a$ such that $\varphi_a^{w_{|_P}}\equiv\top/\bot$. 
Since $p$ is a parents of $a$, by Definition \ref{def: a sadm w.r.t v A}, $p$ is also strongly acceptable/deniable with respect to  $w$ and $S$. Thus, by Definition \ref{def: level based on least S} the maximum level of each $p$ is strictly less than the maximum level of $a$ i.e.\ the maximum level of $p$ in $S$ is at most $k$. Then, by the induction hypothesis, $p\mapsto\tvt/\tvf\in v_k$, for each $p\in P$. 
Therefore, $\varphi_a^{w_{|_P}}\equiv\varphi_a^{{v_k}_{|_P}}$. Further, $\varphi_a^{{v_k}_{|_P}}\equiv\varphi_a^{v_{|_P}}$ because ${v_k}_{|_P}\leq_i v_k$ and $\Gamma_D$ is a monotonic function. Therefore, $a\mapsto\tvt/\tvf\in v_{k+1}$ (and also $a\mapsto\tvt/\tvf\in v_m$). That is, there exists an $m \geq 0$, such that $w\leq_i v_m$.

Further, we have to show that the  natural number $m$ assumed  in the beginning of the proof is the least natural number that satisfies the condition of the theorem. 
Toward a contradiction assume that there exists an $m'< m$ such that  $w\leq_i v_{m'}$. By our assumption the greatest maximum level of an argument of $w$, namely $a$ is $m$ and $S$ is a least set that satisfies the conditions of Definition \ref{def: a sadm w.r.t v A} for all arguments the truth values of them appear in $w$. It is easy to check that $\Gamma_D^{m'}v_{0}(a)=\tvu$. Thus, $w\not\leq_i v_{m'}$. That is, $m$ is the least natural number that satisfies the condition of the theorem.   

\end{proof}
\begin{theorem}
Let $D$ be an ADF and let $v$ be an interpretation of 
$D$. If argument $a$ is strongly acceptable/deniable with respect to $v$ and a least  set $S$, then each $s\in S$ is also strongly acceptable/deniable with respect to $v$ and a $S'\subseteq S$. 
\end{theorem}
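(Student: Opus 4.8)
The plan is to recast the recursion in Definition~\ref{def: a sadm w.r.t v A} as a finite \emph{derivation tree}. Unfolding the witness that $a$ is strongly acceptable/deniable with respect to $v$ and $S$ gives a tree $T$ rooted at $a$ in which, at each node $b$, one records a set $P_b\subseteq(\parents b\cap S)\setminus E_b$ with $\varphi_b^{v_{|_{P_b}}}\equiv\top$ if $v(b)=\tvt$ and $\varphi_b^{v_{|_{P_b}}}\equiv\bot$ if $v(b)=\tvf$, and makes the children of $b$ be exactly the arguments of $P_b$; here $E_b$ is the set of arguments on the path from $a$ to $b$, which is precisely the set $E$ accumulated along that branch by Definition~\ref{def: a sadm w.r.t v A}. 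No argument repeats on a branch (it is excluded by the $\setminus E$), so every branch has length at most $|A|$ and $T$ is finite; this also follows from Lemma~\ref{lem: max finite}. The following will be used throughout: every node of $T$ has its truth value presented in $v$; every non-root node of $T$ belongs to $S$ (it lies in some $P_b\subseteq S$); and $a$ never occurs as a non-root node (it lies in $E_b$ for every $b$).

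First I would exploit that $S$ is a \emph{least} set. Because $a$ is never used in any $P_b$, the same tree $T$ also witnesses strong acceptability/deniability of $a$ with respect to $v$ and $S\setminus\{a\}$, so leastness forces $a\notin S$. Likewise, if some $s\in S$ were \emph{not} a node of $T$, then no $P_b$ would contain $s$, so every $P_b$ would be contained in $(\parents b\cap(S\setminus\{s\}))\setminus E_b$ and $T$ would witness that $a$ is strongly acceptable/deniable with respect to $v$ and $S\setminus\{s\}$ --- contradicting leastness of $S$. Hence every $s\in S$ occurs in $T$, necessarily as a non-root node.

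Then I would fix $s\in S$, choose any occurrence of $s$ in $T$, let $T_s$ be the subtree of $T$ rooted there, and put $S':=\{\,b : b\text{ a non-root node of }T_s\,\}$. Since every node of $T_s$ is a non-root node of $T$, we get $S'\subseteq S$. It remains to verify that $T_s$, with its $E$-sets recomputed from the root $s$, witnesses that $s$ is strongly acceptable/deniable with respect to $v$ and $S'$. At each node $b$ of $T_s$ the condition $\varphi_b^{v_{|_{P_b}}}\equiv\top/\bot$ carries over verbatim, as it depends only on $v$ and $P_b$. The membership condition $P_b\subseteq(\parents b\cap S')\setminus E^{T_s}_b$ also holds: $P_b\subseteq\parents b$ by construction, $P_b\subseteq S'$ because the children of $b$ are non-root nodes of $T_s$, and $P_b$ is disjoint from $E^{T_s}_b$ because $E^{T_s}_b$, the path from $s$ to $b$, is a sub-path of $E^{T}_b$, which $P_b$ already avoids in $T$. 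Thus $s$ is strongly acceptable/deniable with respect to $v$ and $S'\subseteq S$, which is the claim. (Equivalently one can run this as an induction on the depth of $s$ in the derivation tree, but the tree picture makes the role of leastness most transparent.)

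The one genuinely load-bearing step is the leastness argument of the second paragraph: it is exactly what guarantees that $S$ contains no spurious ancestor and that every $s\in S$ is actually used in some witness for $a$. Without leastness the statement is false: for a non-least $S$, a never-used element of $S$ may satisfy none of the conditions of Definition~\ref{def: a sadm w.r.t v A} for $v$ and any subset of $S$. Everything else --- finiteness of $T$, and the observation that a freshly re-rooted subtree of a valid derivation tree is again a valid derivation tree over a smaller parent-source set --- is routine bookkeeping once the tree formulation is in place.
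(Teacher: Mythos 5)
Your proof is correct and follows essentially the same route as the paper: both arguments hinge on leastness of $S$ forcing every $s\in S$ to actually occur in the recursive witness for $a$ (else $S\setminus\{s\}$ would contradict minimality), and on the observation that any argument occurring in that recursion is itself strongly acceptable/deniable via the remainder of the recursion. Your derivation-tree formalization makes explicit the witness set $S'$ (the non-root nodes of the re-rooted subtree) and the compatibility of the $E$-sets, details the paper's proof leaves implicit, but it is the same argument.
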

\begin{proof}
Toward a contradiction assume that there exists $s\in S$ that is not strongly acceptable/deniable with respect to $v$ and any $S'\subseteq S$. By Definition \ref{def: a sadm w.r.t v A}, any argument in set $E\setminus\{a\}$ is an ancestor of $a$ that is strongly acceptable/deniable. Thus,  $s$ is not any of the ancestors of $a$ that appears in set $E$, otherwise it is strongly acceptable/deniable. Therefore, $a$ is also strongly acceptable/deniable with respect to $v$ and $S\setminus\{s\}$. Then, $S$ is not a least set that satisfies the conditions of Definition \ref{def: a sadm w.r.t v A} for $a$. This is a contradiction by the assumption of the theorem that $S$ is a least set. Thus, the assumption that there exists an argument in $S $ that is not strongly acceptable/deniable with respect to  $v$ and a subset of $S$ is wrong. 
\end{proof}
\nop{\noindent	In Theorem \ref{thm: sadm is adm and cf} we show that each strongly admissible interpretation is  an admissible interpretation as well as conflict-free. However, the other direction of the following theorem does not work. For instance, let $D=(\{a, b\}, \{\varphi_a: \neg b\lor a, \varphi_b: \neg a\})$ be a given ADF. The interpretation $v =\{a\mapsto \tvf, b\mapsto\tvt\}$ is an admissible interpretation of $D$, however, neither $a$ nor $b$ is strongly admissible with respect to $v$. Thus, $v$ is not a strongly admissible interpretation of $D$. 
Further, $v' =\{a\mapsto \tvu, b\mapsto\tvt\} $ is a conflict-free interpretation of $D$ that is neither an admissible nor a strongly admissible interpretation. The only strongly admissible interpretation of $D$, which is also the grounded interpretation of $D$, is the trivial interpretation.
\begin{theorem}\label{thm: sadm is adm and cf}
	Let $D= (A, L, C)$ be an ADF and let $v$ be a strongly admissible interpretation of $D$. Then the following hold:
	\begin{itemize}
		\item $v$ is an admissible interpretation of $D$.
		\item $v$ is a conflict-free interpretation of $D$.
	\end{itemize}
	
\end{theorem}
\begin{proof}
	\begin{itemize}
		\item Let $v$ be a strongly admissible interpretation of $D$. We show that $v$ is an admissible interpretation. Toward a contradiction assume that $v$ is not an admissible interpretation, that is, $v\not\leq_i \Gamma_(D)(v)$. That is, there exists $a$ such that $v(a)= \tvt/\tvf$, but $\Gamma_D(v)(a)\not= \tvt/\tvf$. By the assumption $v$ is a strongly admissible interpretation. That is, if $v(a)=\tvt/\tvf$, then $a$ is strongly acceptable/deniable with respect to $v$ and set $S$. Thus, by the first item of Definition \ref{def: a sadm w.r.t v A}, there exists a subset of parents of $a$, namely $P$ such that $\varphi_a^{v_{|_P}}\equiv\top$ if $v(a)=\tvt$, and  $\varphi_a^{v_{|_P}}\equiv\bot$ if $v(a)=\tvf$. However, $\varphi_a^{v_{|_P}}\equiv\top$ implies that $\varphi_a^{v}$ is irrefutable and  $\varphi_a^{v_{|_P}}\equiv\top$ implies that $\varphi_a^{v}$ is unsatisfiable. The former implies if $v(a)=\tvt$, than $\Gamma_D(v)(a)=\tvt$ and the latter one implies that if $v(a)= \tvf$, then $\Gamma_D(v)(a)=\tvf$. This is a contradiction by the assumption that there exists $a$ such that $v(a)= \tvt/\tvf$, and $\Gamma_D(v)(a)\not= \tvt/\tvf$. Thus, the assumption that $v$ is not an admissible interpretation is wrong. Hence, if $v$ is a strongly admissible interpretation, then it is also an admissible interpretation.
		\item If $v$ is a strongly admissible interpretation, then by the first item of this theorem it is an admissible interpretation. By the fact that in ADFs every admissible interpretation is a conflict-free interpretation, we conclude that $v$ is a conflict-free interpretation, as well.
	\end{itemize} 
\end{proof}}
\noindent	To show that the set of strongly admissible interpretations of a given ADF make a lattice, first, in Theorem  \ref{lemma: sup of sadm} we show that every two strongly admissible interpretations of $D$ have a unique supremum. To this end, we first introduce the notion of \emph{join} of two strongly admissible interpretations in Definition \ref{def: join}.

\begin{definition}\label{def: join}
Let $D$ be an ADF and let $v$ and $w$ be two strongly admissible interpretations of $D$. The \emph{join} $v\sqcup_i w$ is defined as
\[
v\sqcup_i w(a)= \begin{cases}
	v(a) & \quad \text{if there exists $a$ s.t. }a\mapsto\tvt/\tvf\in v,\\
	w(a) & \quad \text{if there exists $a$ s.t. }a\mapsto\tvt/\tvf\in w,\\
	\tvu & \quad \text{otherwise.}
\end{cases}
\]
\end{definition}
\begin{proposition}
The join of two strongly admissible interpretations of $D$ is a well-defined function. 
\end{proposition}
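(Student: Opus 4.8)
The plan is to reduce well-definedness to a single agreement statement. The rule in Definition~\ref{def: join} already prescribes a unique value in $\{\tvt,\tvf,\tvu\}$ for every argument $a\in A$, except in the one situation where the truth value of $a$ is presented in both $v$ and $w$ (so the first two clauses both fire) while $v(a)\neq w(a)$; the third clause applies exactly when $a$ is presented in neither, so it can never conflict with the first two. Hence the only thing to prove is that $v$ and $w$ assign the same truth value to every argument whose truth value is presented in both of them.

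To establish that agreement I would first invoke Theorem~\ref{thm: each w in less than a Sadm}: since $v$ and $w$ are strongly admissible, there are indices $m_v,m_w$ with $v\leq_i v_{m_v}$ and $w\leq_i v_{m_w}$, where $(v_i)_{i\geq 0}$ is the ascending sequence $v_0=v_\tvu$, $v_i=\Gamma_D(v_{i-1})$ of Lemma~\ref{lem: constructing Sadm}. Setting $m=\max(m_v,m_w)$ and using $v_i\leq_i v_{i+1}$, both $v$ and $w$ lie below the single interpretation $v_m$; equivalently, by Lemma~\ref{prop: big and smal sadm} both are bounded above by $\grd(D)$, which could play the same role. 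Then, for any argument $a$ whose truth value is presented in both, $v(a)\in\{\tvt,\tvf\}$ together with $v(a)\leq_i v_m(a)$ forces $v_m(a)=v(a)$, since $\tvt$ and $\tvf$ are $\leq_i$-maximal, and symmetrically $v_m(a)=w(a)$; hence $v(a)=w(a)$. Since the three clauses are then mutually consistent and jointly exhaustive, $v\sqcup_i w$ is a genuine function $A\to\{\tvt,\tvf,\tvu\}$, i.e.\ an interpretation of $D$.

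The only real content sits in the agreement step, and that is where I expect the main obstacle: strong admissibility of $v$ and of $w$ taken separately does not, on its face, rule out $v(a)=\tvt$ and $w(a)=\tvf$, so one genuinely needs a common upper bound. This is exactly what Theorem~\ref{thm: each w in less than a Sadm} (or the domination of every strongly admissible interpretation by $\grd(D)$) provides. Everything else is the routine case analysis over Definition~\ref{def: join}.
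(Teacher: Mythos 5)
Your proof is correct and follows essentially the same route as the paper: both reduce well-definedness to showing $v$ and $w$ cannot disagree on a presented argument, and both obtain this by bounding $v$ and $w$ via Theorem~\ref{thm: each w in less than a Sadm} by members of the chain $v_0\leq_i v_1\leq_i\cdots$ and exploiting that this chain is totally ordered. The paper phrases the final step as a contradiction ($v_k\not\leq_i v_m$ and $v_m\not\leq_i v_k$ contradicts the chain property) whereas you argue directly from a common upper bound and the $\leq_i$-maximality of $\tvt$ and $\tvf$, but this is only a cosmetic difference.
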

\begin{proof}

Let $D$ be an ADF and let $v$ and $w$ be two strongly admissible interpretations of $D$.
We show that the join operator is a well-defined function. That is, we have to show that there is no $a$ that has two different values in $v\sqcup_i w$. Toward a contradiction assume that there is a $a$ that has two different outputs in $v\sqcup_i w$. That is, $a$ is assigned to $\tvt$ in one of the interpretations and to $\tvf$ in another one. For instance, $v(a)=\tvt$ and $w(a)=\tvf$. By Theorem \ref{thm: each w in less than a Sadm},   there exists the least natural numbers  $k$ and $m$ such that $v\leq_i v_k$ and $w\leq_i v_m$, respectively. Since  $v\leq_i v_k$ and $v(a)=\tvt$, $a\mapsto\tvt\in v_k$. Further, since $w\leq_i v_m$ and $w(a)=\tvf$, $a\mapsto\tvf\in v_m$. That is, $v_k\not\leq_i v_m$ and $v_m\not\leq_i v_k$. This is a contradiction  by Lemma \ref{lem: constructing Sadm},  that says either $v_k\leq_i v_m$ or $v_m\leq_i v_k$, because $v_k$ and $v_m$ are elements of the sequence of interpretations presented in Lemma \ref{lem: constructing Sadm}. Thus, the assumption that there exists $a$ that is acceptable in a strongly admissible interpretation of $D$ but that is deniable in another strongly admissible of $D$ is wrong. Thus,  $v\sqcup_i w$ is a well-defined function.
\end{proof}
\noindent
Lemma \ref{lem: join is sadm}, presents that the join of two strongly admissible interpretations of a given ADF is also a strongly admissible interpretation of that ADF.
\begin{lemma}\label{lem: join is sadm}
Let $D$ be an ADF and let $v$ and $w$ be strongly admissible interpretations of $D$. Then $v\sqcup_i w $  is also a strongly admissible interpretation of $D$.
\end{lemma}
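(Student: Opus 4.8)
The plan is to reduce the claim to the structural theorems already proven, chiefly Theorem \ref{thm: each w in less than a Sadm} and Lemma \ref{lem: constructing Sadm}. First I would invoke Theorem \ref{thm: each w in less than a Sadm} to obtain least natural numbers $k$ and $m$ with $v\leq_i v_k$ and $w\leq_i v_m$. By the first item of Lemma \ref{lem: constructing Sadm} the sequence $v_0,v_1,\dots$ is $\leq_i$-increasing, so $v_k$ and $v_m$ are $\leq_i$-comparable; without loss of generality assume $k\leq m$, hence $v_k\leq_i v_m$ and therefore both $v\leq_i v_m$ and $w\leq_i v_m$. Next I would observe that, by the definition of the join (Definition \ref{def: join}) together with the well-definedness just established, every argument $a$ with $(v\sqcup_i w)(a)=\tvt/\tvf$ already has that same value in $v$ or in $w$, and hence in $v_m$; so $v\sqcup_i w\leq_i v_m$.

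Then the heart of the argument is to show directly that $v\sqcup_i w$ is strongly admissible, i.e.\ that for each $a$ with $(v\sqcup_i w)(a)=\tvt/\tvf$, argument $a$ is strongly acceptable/deniable with respect to $v\sqcup_i w$ and some set $S$. Fix such an $a$; say $(v\sqcup_i w)(a)=v(a)=\tvt/\tvf$ (the case where the value comes from $w$ is symmetric). Since $v$ is strongly admissible, $a$ is strongly acceptable/deniable with respect to $v$ and a least set $S_a$. Because $v\leq_i v\sqcup_i w$ (each argument with a presented value in $v$ keeps that value in the join), Lemma \ref{lem: sadm in greater int} gives that $a$ is strongly acceptable/deniable with respect to $v\sqcup_i w$ and the same least set $S_a$. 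This holds for every such $a$, so by Definition \ref{def: strong adm} the interpretation $v\sqcup_i w$ is strongly admissible.

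The step I expect to need the most care is the routine-looking but essential observation that $v\leq_i v\sqcup_i w$ and $w\leq_i v\sqcup_i w$; this is exactly where well-definedness of the join is used, since if some $a$ were assigned $\tvt$ by $v$ and $\tvf$ by $w$ the join would not dominate both, and Lemma \ref{lem: sadm in greater int} could not be applied. Once that monotonicity is in hand, the application of Lemma \ref{lem: sadm in greater int} is immediate and the rest is bookkeeping. (Alternatively, one could avoid Lemma \ref{lem: sadm in greater int} entirely and argue via $v\sqcup_i w\leq_i v_m$ together with a separate characterisation of the strongly admissible interpretations below a fixed $v_m$, but the route through Lemma \ref{lem: sadm in greater int} is shorter and self-contained.)
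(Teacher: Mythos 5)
Your proposal is correct and follows essentially the same route as the paper: both reduce the claim to the observation that every presented value in $v\sqcup_i w$ comes from $v$ or $w$, note that $v\leq_i v\sqcup_i w$ and $w\leq_i v\sqcup_i w$ (which rests on well-definedness of the join), and then apply Lemma~\ref{lem: sadm in greater int}; the paper merely phrases this as a proof by contradiction. Your opening detour through Theorem~\ref{thm: each w in less than a Sadm} is not needed for this lemma, but it does no harm.
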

\begin{proof}
Toward a contradiction assume that $v\sqcup_i w $ is not a strongly admissible interpretation of $D$. Thus, there exists an $a$ such that  $a\mapsto\tvt/\tvf\in v\sqcup_i w$ but it is not strongly acceptable/deniable with respect to $v\sqcup_i w $ and any  set. By Definition \ref{def: join}, either $a\mapsto\tvt/\tvf \in v$ or $a\mapsto\tvt/\tvf \in w$. Since $v$ and $w$ are strongly admissible interpretations, $a$ is strongly acceptable/deniable with respect to $v$ or $w$.  Since $v\leq_i v\sqcup_i w$ and $w\leq_i v\sqcup_i w$, by Lemma \ref{lem: sadm in greater int}, 
$a$ is strongly acceptable/deniable with respect to $v\sqcup_i w$. This is a contradiction with the assumption that $a$ is not strongly acceptable/deniable with respect to $v\sqcup_i w$. Thus, the assumption that $v\sqcup_i w$ is not a strongly admissible interpretation was wrong. That is, the join of two strongly admissible interpretations of $D$ is a strongly admissible interpretation of $D$.
\nop{
	(bi rabt)
	Let $v_i$ for $i\geq 0$ be the sequence of  interpretations presented in Lemma \ref{lem: constructing Sadm}. 
	By Theorem \ref{thm: each w in less than a Sadm}, there exists least natural numbers of $m$ and $k$ such that $v\leq_i v_m $ and $w\leq_i v_k$.  Assume that $m\geq k$. Thus,  $w\leq_i v_m$. Therefore, every two strong admissible interpretation has an upper bound.} 
\end{proof}

\begin{theorem}\label{lemma: sup of sadm}
Let $D$ be an ADF.  Every two strongly admissible interpretations of $D$ have a unique supremum. 

\end{theorem}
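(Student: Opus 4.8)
The plan is to show that the join $v \sqcup_i w$ introduced in Definition~\ref{def: join} is the supremum of $v$ and $w$ in the set of strongly admissible interpretations ordered by $\leq_i$. First I would check that $v \sqcup_i w$ is an upper bound of both. Fix an argument $a$. If $v(a) = \tvt/\tvf$, then since the join is a well-defined function (established in the preceding proposition), the first clause of Definition~\ref{def: join} applies and $(v \sqcup_i w)(a) = v(a)$, so $v(a) \leq_i (v \sqcup_i w)(a)$; if $v(a) = \tvu$ this inequality holds trivially. Hence $v \leq_i v \sqcup_i w$, and symmetrically $w \leq_i v \sqcup_i w$. By Lemma~\ref{lem: join is sadm}, $v \sqcup_i w$ is itself a strongly admissible interpretation of $D$, so it is a legitimate candidate for the supremum inside the poset under consideration.

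Next I would verify minimality among upper bounds. Let $u$ be any interpretation of $D$ with $v \leq_i u$ and $w \leq_i u$. For each argument $a$: if $(v \sqcup_i w)(a) = \tvt$, then by Definition~\ref{def: join} either $v(a) = \tvt$ or $w(a) = \tvt$; in the first case $v \leq_i u$ forces $u(a) = \tvt$, and in the second case $w \leq_i u$ does. The case $(v \sqcup_i w)(a) = \tvf$ is symmetric, and the case $(v \sqcup_i w)(a) = \tvu$ is immediate. Therefore $v \sqcup_i w \leq_i u$, so $v \sqcup_i w$ is a least upper bound. Uniqueness is then automatic: $\leq_i$ on interpretations is a genuine partial order (if $u \leq_i u'$ and $u' \leq_i u$ then $u(a) = u'(a)$ for every $a$), and least upper bounds in a partial order are unique whenever they exist.

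The main conceptual obstacle has in fact already been cleared by the earlier results: the well-definedness of $\sqcup_i$ rests on Theorem~\ref{thm: each w in less than a Sadm} together with the fact (Lemma~\ref{lem: constructing Sadm}) that the sequence $v_0, v_1, \ldots$ is a $\leq_i$-chain, and the closure of strong admissibility under $\sqcup_i$ rests on Lemma~\ref{lem: sadm in greater int}. What remains in this theorem is only the routine lattice-theoretic bookkeeping sketched above, so I expect no serious difficulty here; the one point to be careful about is to invoke well-definedness explicitly when reading off $(v \sqcup_i w)(a)$ from Definition~\ref{def: join}, since the two non-trivial clauses of that definition are only mutually consistent because $v$ and $w$ cannot disagree on any argument whose truth value is presented in both.
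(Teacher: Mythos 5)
Your proposal is correct and follows essentially the same route as the paper: exhibit $v\sqcup_i w$ as the candidate, use the well-definedness proposition and Lemma~\ref{lem: join is sadm} to see it is a strongly admissible upper bound, and then argue leastness argument-by-argument from Definition~\ref{def: join}. The only (cosmetic) difference is that you prove leastness directly against an arbitrary upper bound $u$, whereas the paper argues by contradiction against a hypothetical upper bound strictly below the join; your direct version is, if anything, slightly cleaner.
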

\begin{proof}
Let $D$ be an ADF and let $v$ and $w$ be two strongly admissible interpretations of $D$. We show that $v\sqcup_i w$ is a supremum of $v$ and $w$.
By Definition \ref{def: join}, $v\sqcup_iw$ is an upper bound of $v$ and $w$. By Lemma \ref{lem: join is sadm}, $v\sqcup_i w$ is a strongly admissible interpretation of $D$. It remains to show that $v\sqcup_i w$ is a least upper bound of $v$ and $w$. Toward a contradiction, assume that $v\sqcup_i w$ is not the least upper bound of $v$ and $w$. That is, there exists  a   strongly admissible interpretation $w'$ of $D$ such that  $v\leq_i w'$, $w\leq_i w'$ and $w'<_i v\sqcup_i w$. Thus there exists $a$ with $a\mapsto\tvu\in w'$ and $a\mapsto\tvt/\tvf\in v\sqcup_i w$. Thus, either $a\mapsto\tvt/\tvf\in v$ or $a\mapsto\tvt/\tvf\in w$. That is, either $v\not\leq_i w'$ or $w\not\leq_i w'$. This is a contradiction by the assumption that $w'$ is the least upper bound of $v$ and $w$. Thus, the assumption that $v\sqcup_i w$ is not the least upper bound of $v$ and $w$ was wrong.  
\end{proof}
\noindent Further, to show that the set of strongly admissible interpretations of ADF $D$ make a lattice, in Theorem  \ref{lem: inf of sadm} we show that every two strongly admissible interpretations of $D$ have an infimum. To this end, in Definition \ref{def: uniq max sadm}, we present the concept of the maximum strongly admissible interpretation contained in an interpretation of $D$.  
\begin{definition}\label{def: uniq max sadm}
Let $D$ be an ADF and let $v$ be an interpretation of $D$. Interpretation $w$ is called a unique maximum strongly admissible interpretation that is less than or equal to $v$,  with respect to $\leq_i$ ordering if the following conditions hold:
\begin{itemize}
	\item $w$ is a strongly admissible interpretation of $D$ s.t. $w\leq_i v$,
	\item  there is no  strongly admissible interpretation $w'$ of $D$ such that  $w<_i w'\leq_i v$.
	
\end{itemize}
\end{definition}
\begin{lemma}\label{lem: every int has sadm}
Let $D$ be an ADF and let $v$ be an interpretation of $D$. Then, there exists  a unique maximum strongly admissible interpretation that is less than or equal to $v$,  with respect to $\leq_i$ ordering. 
\end{lemma}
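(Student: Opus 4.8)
The plan is to work with the set $X$ of all strongly admissible interpretations of $D$ that are $\leq_i v$, and to show that $X$ has a greatest element; this element is the desired $w$. First I would observe that $X\neq\emptyset$, since by Lemma~\ref{lem: constructing Sadm} the trivial interpretation $v_\tvu$ is strongly admissible and $v_\tvu\leq_i v$ trivially. Since $A$ is finite there are only finitely many interpretations of $D$, so $X$ is a finite non-empty set. The key structural fact I would then establish is that $X$ is closed under the join of Definition~\ref{def: join}: given $v',w'\in X$, the interpretation $v'\sqcup_i w'$ is strongly admissible by Lemma~\ref{lem: join is sadm}, and it satisfies $v'\sqcup_i w'\leq_i v$ — indeed, for each $a$ one inspects the three cases of Definition~\ref{def: join}: if $(v'\sqcup_i w')(a)=v'(a)\in\{\tvt,\tvf\}$ then $v'(a)=v(a)$ because $v'\leq_i v$; symmetrically if it equals $w'(a)$; and if it is $\tvu$ then $\tvu\leq_i v(a)$ holds trivially. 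Hence $v'\sqcup_i w'\in X$.

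From this, the maximum of $X$ is obtained by joining all of its (finitely many) elements: writing $X=\{w_1,\dots,w_k\}$, set $w:=w_1\sqcup_i\cdots\sqcup_i w_k$ (left-associated), which is well defined since each partial join of strongly admissible interpretations is again strongly admissible by Lemma~\ref{lem: join is sadm}, and which lies in $X$ by the closure property just proved. Because $u\sqcup_i u'$ is an upper bound of $u$ and $u'$ (Theorem~\ref{lemma: sup of sadm}), an easy induction on $k$ gives $w_j\leq_i w$ for every $j$; thus $w$ is the greatest element of $X$. It follows that $w$ satisfies Definition~\ref{def: uniq max sadm}: it is strongly admissible with $w\leq_i v$, and if some strongly admissible $w'$ had $w<_i w'\leq_i v$ then $w'\in X$ would force $w'\leq_i w$, contradicting $w<_i w'$. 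For uniqueness, suppose $w_1$ and $w_2$ both satisfy Definition~\ref{def: uniq max sadm}; then $w_1\sqcup_i w_2\in X$ by closure, so $w_1\leq_i w_1\sqcup_i w_2\leq_i v$ with $w_1\sqcup_i w_2$ strongly admissible, and hence the second clause of Definition~\ref{def: uniq max sadm} for $w_1$ forces $w_1\sqcup_i w_2=w_1$, i.e.\ $w_2\leq_i w_1$; symmetrically $w_1\leq_i w_2$, and since $w_1\leq_i w_2$ and $w_2\leq_i w_1$ together force $w_1=w_2$, we are done.

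The argument is largely bookkeeping once Lemma~\ref{lem: join is sadm} and Theorem~\ref{lemma: sup of sadm} are available; the only point demanding care is the closure claim, namely that joining two strongly admissible interpretations that lie below $v$ again lies below $v$. This has to be checked directly from the case definition of $\sqcup_i$ in Definition~\ref{def: join} rather than by appealing to an abstract supremum, since $\leq_i$ does not make the full set of interpretations a lattice — the decided values $\tvt$ and $\tvf$ have no common upper bound there — so it is precisely the restriction to strongly admissible interpretations, together with finiteness of $D$, that makes the maximum exist.
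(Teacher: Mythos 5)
Your proposal is correct and follows essentially the same route as the paper: nonemptiness via the trivial interpretation, finiteness of $D$ to get a maximal element, and Lemma~\ref{lem: join is sadm} (the join) to collapse any two candidates into one. The one place you are more careful than the paper is in explicitly checking that the join of two strongly admissible interpretations below $v$ is again below $v$ (using that decided values are $\leq_i$-maximal), a step the paper's proof asserts without argument; this is a worthwhile addition but not a different method.
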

\begin{proof}
Each interpretation of $D$ has at least as much information as   the trivial interpretation. Thus, each $v$ of $D$ has at least as much information as  $v_\tvu$, which is a strongly admissible interpretation. Since the number of arguments of $D$ is finite, there exists at least one maximal strongly admissible interpretation of $D$, 
namely $w$ for the given interpretation $v$. We show that this $w$ is unique. Toward a contradiction assume that there are two maximal strongly admissible interpretations that satisfy the condition of the lemma, namely $w$ and $w'$. By Lemma \ref{lem: join is sadm}, $w\sqcup_i w'$ is a strongly admissible interpretation of $D$ s.t. $w\sqcup_i w'\leq_i v$. However, $w\leq_i w\sqcup_i w'$ and $w'\leq_i w\sqcup_i w'$ together with  the assumption that  $w$ and $w'$ are maximal strongly admissible interpretations lead to   $w\sim_i w\sqcup_i w'$ and $w' \sim_i w\sqcup_i w'$. That is, $w \sim_i w'$.     Thus, the maximum strongly admissible interpretation which is contained in $v$ is unique. 
\end{proof}
\begin{theorem}\label{lem: inf of sadm}
Let $D$ be an ADF. Every two strongly admissible interpretations of $D$ have a unique infimum.
\end{theorem}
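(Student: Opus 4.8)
The plan is to exhibit the infimum explicitly as the unique maximum strongly admissible interpretation contained in the information meet $v \sqcap_i w$, and then to get uniqueness for free from antisymmetry of $\leq_i$. Concretely, given strongly admissible interpretations $v$ and $w$ of $D$, I would form the meet $v\sqcap_i w$, which is an interpretation of $D$ (not necessarily strongly admissible), and apply Lemma \ref{lem: every int has sadm} to obtain the unique maximum strongly admissible interpretation $u$ with $u\leq_i v\sqcap_i w$. Since $v\sqcap_i w\leq_i v$ and $v\sqcap_i w\leq_i w$ in the meet-semilattice of all interpretations, transitivity of $\leq_i$ gives $u\leq_i v$ and $u\leq_i w$, so $u$ is a common lower bound of $v$ and $w$ among the strongly admissible interpretations.

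The next step is to show $u$ dominates every strongly admissible lower bound. Let $z$ be strongly admissible with $z\leq_i v$ and $z\leq_i w$; then $z\leq_i v\sqcap_i w$ because $v\sqcap_i w$ is the greatest lower bound of $v$ and $w$. Now consider $z\sqcup_i u$, which is strongly admissible by Lemma \ref{lem: join is sadm}. From Definition \ref{def: join}, $z\sqcup_i u$ carries a non-$\tvu$ value exactly at those arguments where $z$ or $u$ does, and it agrees with each of them there; hence $u\leq_i z\sqcup_i u$ and $z\leq_i z\sqcup_i u$, and moreover $z\sqcup_i u\leq_i v\sqcap_i w$ since $v\sqcap_i w$ already extends both $z$ and $u$. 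The maximality clause of Definition \ref{def: uniq max sadm} (as established in Lemma \ref{lem: every int has sadm}) forbids any strongly admissible interpretation strictly between $u$ and $v\sqcap_i w$, so $z\sqcup_i u\sim_i u$, i.e.\ $z\sqcup_i u=u$; therefore $z\leq_i u$. This proves $u$ is the greatest lower bound of $v$ and $w$ in the set of strongly admissible interpretations.

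Finally, uniqueness of the infimum is immediate: $\leq_i$ on interpretations is a partial order, since $x\leq_i y$ and $y\leq_i x$ force $x=y$ for functions into $\three$, and greatest lower bounds in a poset are unique when they exist. I expect the only place needing genuine care to be the inequality $z\sqcup_i u\leq_i v\sqcap_i w$, that is, the fact that the join of Definition \ref{def: join} is below every common upper bound of its two arguments; this is a direct consequence of the case definition of the join, and once it is available, Lemmas \ref{lem: join is sadm} and \ref{lem: every int has sadm} supply the rest. Combined with Theorem \ref{lemma: sup of sadm}, this yields that the strongly admissible interpretations of $D$ form a lattice, with $v_\tvu$ as least element and $\grd(D)$ as greatest element by Lemma \ref{prop: big and smal sadm}.
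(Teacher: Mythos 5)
Your proposal is correct and follows essentially the same route as the paper: construct the infimum as the unique maximum strongly admissible interpretation below the meet $v\sqcap_i w$ via Lemma~\ref{lem: every int has sadm}, then show it dominates every strongly admissible lower bound. In fact your explicit join argument ($z\sqcup_i u\leq_i v\sqcap_i w$, hence $z\sqcup_i u=u$ by maximality) carefully justifies a step that the paper's proof only asserts, namely that any strongly admissible $w''\leq_i v\sqcap_i w$ satisfies $w''\leq_i w'$.
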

\begin{proof}
Let $D$ be an ADF and let $v$ and $v'$ be two strongly admissible interpretations of $D$. Let $w = v\sqcap_i v'$. By Lemma \ref{lem: every int has sadm},  there exists a unique maximum strongly admissible interpretation $w'$ that is less than or equal to $w$, i.e.\ $w'\leq_i w$. That is $w'$ is a lower bound of $v$ and $v'$. It remains to show that $w'$ is the greatest lower bound of $v$ and $v'$. Toward a  contradiction assume that there exists $w''$ that is the greatest lower bound of  $v$ and $v'$. That is, $w''\leq_i v$ and $w''\leq_i v'$. Then by the definition $w''\leq_i  (v\sqcap_i v'= w)$. By the assumption $w'$ is the maximum strong admissible that is less or equal to $w$, thus, $w''\leq_i w'$. Thus, $w'$ is an infimum of $v$ and $v'$.
\end{proof}

\begin{theorem}\label{thm: sadf form a lattice}
Let $D$ be an ADF. The strongly admissible interpretations of $D$ form a lattice with respect to the $\leq_i$-ordering, with the least  element $v_\tvu$ and the top element $\grd(D)$. 
\end{theorem}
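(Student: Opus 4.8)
The plan is to assemble Theorem~\ref{thm: sadf form a lattice} entirely from the results already established in this section, so that the proof is essentially a bookkeeping argument rather than new work. Recall that a partially ordered set is a lattice if every pair of elements has both a supremum and an infimum, and it has a least (top) element if that element lies below (above) every element of the poset. So the proof will cite exactly four facts: Theorem~\ref{lemma: sup of sadm} for existence of binary suprema, Theorem~\ref{lem: inf of sadm} for existence of binary infima, and Lemma~\ref{prop: big and smal sadm} for the two distinguished elements.

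First I would note that $\leq_i$ restricted to the set $\adm^s(D)$ of strongly admissible interpretations of $D$ is a partial order: reflexivity, transitivity, and antisymmetry are inherited from the partial order $\leq_i$ on all interpretations $\mathcal{V}$ (antisymmetry holds because $v\leq_i w$ and $w\leq_i v$ force $v(a)=w(a)$ for every $a\in A$, so $v\sim_i w$ is genuine equality, not mere equivalence). Then, given two strongly admissible interpretations $v$ and $w$, Theorem~\ref{lemma: sup of sadm} supplies a unique supremum $v\sqcup_i w$ (which by Lemma~\ref{lem: join is sadm} is again strongly admissible, hence lies in $\adm^s(D)$), and Theorem~\ref{lem: inf of sadm} supplies a unique infimum, which by construction in its proof is a strongly admissible interpretation as well. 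This establishes that $(\adm^s(D), \leq_i)$ is a lattice.

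Next I would identify the two extremal elements. By Lemma~\ref{prop: big and smal sadm}, $v_\tvu$ is a strongly admissible interpretation, and since $v_\tvu \leq_i v$ for every interpretation $v$ of $D$ (the trivial interpretation carries the least information), $v_\tvu$ is the least element of the lattice. Again by Lemma~\ref{prop: big and smal sadm}, the grounded interpretation $\grd(D)$ is strongly admissible, and moreover it is the $\leq_i$-greatest strongly admissible interpretation; hence $w \leq_i \grd(D)$ for every $w \in \adm^s(D)$, so $\grd(D)$ is the top element. Combining these observations yields the theorem.

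I do not expect any serious obstacle here, since the substantive content has been front-loaded into the supremum and infimum theorems and into Lemma~\ref{prop: big and smal sadm}. The only point needing a moment's care is the distinction between $\sim_i$-equivalence and equality: because the underlying order $\leq_i$ on $\mathcal{V}$ is genuinely antisymmetric (two interpretations agreeing in the $\leq_i$ preorder agree pointwise), the set of strongly admissible interpretations is a poset in the strict sense and the uniqueness clauses in Theorems~\ref{lemma: sup of sadm} and~\ref{lem: inf of sadm} give honest meets and joins rather than only up-to-equivalence ones; I would state this explicitly so the word ``lattice'' is unambiguous.
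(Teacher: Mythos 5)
Your proposal is correct and follows essentially the same route as the paper: binary suprema from Theorem~\ref{lemma: sup of sadm}, binary infima from Theorem~\ref{lem: inf of sadm}, and the two extremal elements from Lemma~\ref{prop: big and smal sadm}. One small caveat: the proof of Lemma~\ref{prop: big and smal sadm} only establishes that $\grd(D)$ \emph{is} strongly admissible (as the limit of the sequence of Lemma~\ref{lem: constructing Sadm}); the claim that it dominates \emph{every} strongly admissible interpretation really rests on Theorem~\ref{thm: each w in less than a Sadm}, which the paper cites explicitly at this point and which you should cite as well rather than relying on the bare statement of the lemma.
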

\begin{proof}
We have to show that every two strongly admissible interpretations of $D$ have a supremum and an infimum. Theorem \ref{lemma: sup of sadm} shows the former one and Theorem \ref{lem: inf of sadm} indicates the latter one. Thus, the strongly admissible interpretations of $D$ make a lattice with respect to the $\leq_i$-ordering. In Lemma \ref{prop: big and smal sadm}, it is shown that $v_\tvu$ is the least strongly admissible interpretation and $\grd(D)$ is the largest strongly admissible interpretation of the sequence of the interpretations presented in Lemma \ref{lem: constructing Sadm}. This fact together with Theorem \ref{thm: each w in less than a Sadm}, shows that $\grd(D)$ is the greatest element of this lattice. It is trivial that $v_{\tvu}$ is the least element of this lattice. 
\end{proof}
\noindent	The set of strongly admissible interpretations of ADF $D= (\{a,b,c,d\},\newline \{\varphi_a: \top, \varphi_b: a\land \neg c, \varphi_c: \neg b \land d, \varphi_d: \bot \})$, given in Example \ref{exp: s adm} form a  lattice, depicted in Figure \ref{fig: lattice}. The top element of this lattice is $\grd(D)= \{a\mapsto\tvt, b\mapsto\tvt, c\mapsto\tvf, d\mapsto\tvf\}= \{a, b, \neg c, \neg d\}$. 

\begin{figure}[t]

\centering
\begin {tikzpicture}[black!70, >=stealth,node distance=0.6cm,
thick,main node/.style={fill=none,draw,minimum size = 0.4cm,font=\normalsize\bfseries},
condition/.style={fill=none,draw=none,font=\small\bfseries}, scale=0.6]
\path
(0,0)node (a) {$\{\}$}
++(-2.,2) node (b) {$\{a\}$}
++(4,0) node (c) {$\{\neg d\}$}
++(-2,2) node (d) {$\{a, \neg d\}$}
++(4,0) node  (e) {$\{\neg c, \neg d\}$}
++(-2,2) node (f) {$\{a, \neg d, \neg c\}$}
++(-2,2) node (g) {$\{a, b, \neg c, \neg d\}$};

\path[thick, -] 
(a) edge  (b)
(a) edge  (c)
(b) edge  (d)
(c) edge  (d)
(a) edge  (d)
(c) edge  (e)
(e) edge  (f)
(d) edge  (f)
(f) edge  (g)
(d) edge  (g)
(b) edge  (g);

\end{tikzpicture}
\caption{Complete lattice of the strongly admissible interpretations of the  ADF of Example~\ref{exp: s adm}}
\label{fig: lattice}
\end{figure}
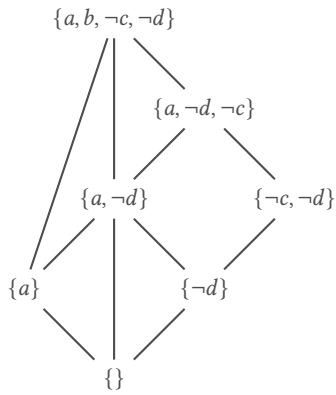
\section{Conclusion}\label{sec: con}
In this work we have defined strongly admissible semantics for ADFs, based on the concept of strongly acceptable/deniable arguments.
From a theoretical perspective, we have observed
that the strongly admissible interpretations of a given ADF form a lattice with the trivial interpretation as the unique minimal element and
the grounded interpretation as the unique maximal element.

The concept of strong admissibility is related to grounded semantics in a similar
way as the concept of admissibility is related to preferred semantics. That is, to answer the credulous decision problem  of an ADF under the grounded semantics, there is no need of constructing the full grounded interpretation of the given ADF. Instead, it is enough to construct a strongly admissible interpretation of the given ADF that satisfies the decision problem.  Similarly, to answer the credulous decision problem of ADFs under preferred semantics, it is enough to investigate whether there exists an admissible interpretation in order to solve the decision problem.We used this method in  preferred  discussion games in \cite{zafarghandi2019discussion} to answer the credulous decision problem of ADFs under preferred semantics.

Possible future research questions include whether the concept of strongly admissible semantics for ADFs, presented in this work,  is a proper generalization of the concept of strongly admissible semantics for AFs \cite{DBLP:journals/ai/BaroniG07,DBLP:conf/comma/Caminada14}.

Further, it is interesting to investigate how the concept of strong admissibility of ADFs relates to the grounded discussion game presented in \cite{DBLP:conf/comma/KeshavarziVV20}. In other words,   investigation is required of the question  whether the discussion game presented in \cite{DBLP:conf/comma/KeshavarziVV20} to answer the credulous decision problem of ADFs under the grounded semantics  is equivalent to answer  the same decision problem under strong admissibility interpretation. The grounded discussion game was defined over ADFs without any redundant links, however, the concept of strongly admissible semantics is presented for all kinds of ADFs. Thus, we will investigate whether the concept of strongly admissible semantics is at the
basis of  the proof procedures of the grounded discussion games for ADFs without any redundant links.

Further, we would like to investigate whether   the grounded discussion game presents the shortest discussion/explanation that answers the credulous  decision problems under strongly admissible/
grounded semantics  for the given argument of ADFs.

Computational complexity classes of  semantics of AFs and ADFs are presented in \cite{Dvork2017ComputationalPI}.
Computational complexity of strongly admissible semantics of AFs is studied in \cite{DBLP:conf/comma/DvorakW20}.
Further, in \cite{DBLP:conf/comma/CaminadaD20}, the computational complexity of
identifying strongly admissible labellings with bounded or minimal size was studied. As a future work, it would be interesting to clarify the computational complexity of investigating of the truth value of an argument in a strongly admissible interpretation of a given ADF.

\begin{acks}
The authors would like to thank Dr. M. Caminada and Prof. dr. S. Woltran for their recommendations for presenting the notion of strongly admissible semantics for ADFs. The work is supported by 
the Center of Data Science $\&$ Systems Complexity (DSSC) Doctoral Programme, at the University of Groningen.
\end{acks}

\bibliographystyle{ieeetr}
\bibliography{StrongAdmissibilityofADFs} 
\end{document}